\newtheorem{assumption}[theorem]{Assumption}
\newtheorem{discussion}{Discussion}
\theoremstyle{plain}
\newtheorem{remark}[theorem]{Remark}
\newtheorem{condition}{Condition}
\newcommand{\bX}{\mathsf{X}}
\newcommand{\bU}{\mathsf{U}}
\newcommand{\bY}{\mathsf{Y}}
\newcommand{\bH}{\mathsf{H}}
\newcommand{\bZ}{\mathsf{Z}}
\theoremstyle{empty}
\newtheorem{duplicate}{NameIgnored}
\newcommand{\cO}{\mathcal{O}}
\newcommand{\cK}{\mathcal{K}}
\newcommand{\cL}{\mathcal{L}}
\newcommand{\cA}{\mathcal{A}}
\newcommand{\cV}{\mathcal{V}}
\newcommand{\cQ}{\mathcal{Q}}
\newcommand{\cP}{\mathcal{P}}
\newcommand{\cB}{\mathcal{B}}
\newcommand{\bE}{\mathbb{E}}
\newcommand{\bP}{\mathbb{P}}
\newcommand{\bR}{\mathbb{R}}
\newcommand{\defn}{\stackrel{\rm def}{=}}
\setlist[enumerate]{leftmargin=.5in}
\setlist[itemize]{leftmargin=.5in}
\crefname{hypothesis}{Hypothesis}{Hypotheses}
\title{Finite-Time Analysis of Natural Actor-Critic for POMDPs\footnote{This work was supported by Illinois Institute for Data Science and Dynamical Systems funded by NSF Award \#1934986.}}
\author{Semih Cayci\thanks{Department of Mathematics, RWTH Aachen University, Aachen, Germany 
  (\email{cayci@mathc.rwth-aachen.de}).}
\and Niao He\thanks{Department of Computer Science, ETH Zurich, Zurich, Switzerland 
  (\email{niao.he@inf.ethz.ch}).}
\and R. Srikant\thanks{Department of Electrical and Computer Engineering, University of Illinois at Urbana-Champaign, Urbana, IL 
  (\email{rsrikant@illinois.edu}).}}
\begin{document}

\maketitle

\begin{abstract}
We study the reinforcement learning problem for partially observed Markov decision processes (POMDPs) with large state spaces. We consider a natural actor-critic method that employs an internal memory state for policy parameterization to address partial observability, function approximation in both actor and critic to address the curse of dimensionality, and a multi-step temporal difference learning algorithm for policy evaluation. We establish non-asymptotic error bounds for actor-critic methods for partially observed systems under function approximation. In particular, in addition to the function approximation and statistical errors that also arise in MDPs, we explicitly characterize the error due to the use of finite-state controllers. This additional error is stated in terms of the total variation distance between the belief state in POMDPs and the posterior distribution of the hidden state when using a finite-state controller. Further, in the specific case of sliding-window controllers, we show that this inference error can be made arbitrarily small by using larger window sizes under certain ergodicity conditions.
\end{abstract}



\section{Introduction}
\label{sec:intro}

The class of optimal control problems where the controller has access to only noisy observations of the system state is modeled as partially observed Markov decision processes (POMDPs) \cite{sondik1971optimal, smallwood1973optimal, kumar2015stochastic, bertsekas2012dynamic}. Since the underlying state is only partially known to the controller in POMDPs, the optimal policy depends on the complete history of the system, making the problem highly intractable \cite{bertsekas2012dynamic, krishnamurthy2016partially}. To overcome computational challenges in solving POMDPs, a plethora of model-based and model-free reinforcement learning approaches have been proposed in the literature that incorporate finite memory into the controller, via internal state or quantization of belief state; see, e.g., surveys by \cite{shani2013survey, krishnamurthy2016partially, murphy2000survey}. 

The actor-critic framework, which combines the benefits of both value-based methods and policy gradient methods, has shown great promise in learning POMDPs in practice~\cite{wierstra2010recurrent, yu2006approximate,srinivasan2018actor,jurvcivcek2011natural,xu2015acis, lee2020stochastic}. These methods offer more flexibility in controlling the bias-variance tradeoff. Furthermore, in the case of POMDPs, finite-state stochastic policies, which we aim to learn in this paper by using the NAC framework, were shown to achieve superior practical performance \cite{williams1998experimental}. However, theoretical analyses of the convergence rates and optimality properties of these POMDP solvers seem largely absent, particularly in the interesting case of function approximation for large state-action-observation spaces. In this paper, we provide new results for this fundamental problem.

\subsection{Main Contributions}
In this paper, we consider a model-free natural actor-critic (NAC) method for POMDPs: (i) the actor employs an internal state representation as a form of memory and performs efficient natural policy gradient update; and (ii) the critic employs a multi-step temporal difference learning algorithm to obtain the value functions.
Our main contributions include the following:

\textbullet~\textbf{Finite-time analysis of NAC for POMDPs.} We establish, to our best knowledge, the first finite-time performance bounds of actor-critic-type methods with function approximation for large POMDPs, with explicit characterization of the convergence rate, function approximation error, and inference error due to the partial observability.

\textbullet~\textbf{Multi-step TD(0) for policy evaluation for POMDPs.}~We consider a multi-step TD learning algorithm with linear function approximation to learn the value function under an internal-state controller, and establish finite-time bounds for this algorithm. In particular, in Theorem \ref{thm:m-tdl}, we show that policy evaluation with $m$-step TD(0) learning resolves the perceptual aliasing error that stems from partial observability at a rate $\cO(\gamma^{m/2})$ at the expense of an extra factor $\cO(m)$ in sample complexity, which identifies a tradeoff between sample complexity and perceptual aliasing error.

\textbullet~\textbf{Memory-Inference Error Tradeoff under Sliding-Window Controllers.}~We further consider NAC with sliding-window controllers as a specific case of finite-state controllers and provide explicit bounds on the tradeoff between memory complexity and inference error in Proposition \ref{prop:filter-stability}. Notably, under ergodicity conditions to ensure filter stability, the inference error (due to using limited memory) decays at a geometric rate in the window-length.

\subsection{Related Work}

\textbf{PG/NPG/NAC for MDPs:} Policy gradient methods have been extensively investigated for fully-observed MDPs \cite{konda2003onactor, agarwal2021theory, bhandari2019global, lan2021policy, mei2020global, xu2020improving, khodadadian2021finite}. As they rely on the perfect state observation in MDPs, they do not address the problem of partial observability that we consider in this paper.

\textbf{Policy Evaluation for POMDPs:} Policy evaluation methods have been considered in \cite{singh1994learning, baxter2001infinite} for average-reward POMDPs for the specific class of reactive (memoryless) policies in the tabular setting. In \cite{yu2005function}, tabular TD(1) was adapted for finite-state controllers. In our work, as part of the natural actor-critic framework, we present finite-time analysis of multi-step TD learning algorithm with linear function approximation for large POMDPs. For POMDPs, finite-step TD learning methods do not converge to the true value functions for policy evaluation unlike MDPs. However, we show in this paper that the error can be controlled by employing a multi-step TD learning algorithm, which leads to a tradeoff between sample complexity and accuracy. In particular, we prove the effectiveness of multi-step TD learning for POMDPs by providing a non-asymptotic analysis, which yields explicit sample complexity bounds and exhibits the impact of partial observability and function approximation. 

\textbf{RL for POMDPs:} In \cite{kara2021convergence}, tabular Q-learning with sliding-window controllers was considered for the tabular case. In \cite{efroni2022provable, wang2022embed}, the RL problem for POMDPs was investigated under an $m$-step decodability assumption and under linear transition dynamics. In \cite{subramanian2022approximate}, general internal state structures are considered and asymptotic convergence is shown in the tabular case. In this paper, we consider general POMDPs under linear function approximation without any realizability assumptions, and explicitly quantify (i) inference error due to using finite-state controllers, and (ii) function approximation error due to non-linear dynamics. We then show that the subclass of sliding-window controllers asymptotically achieves near-optimality under filter stability conditions.

Our paper takes a different path in terms of RL methodology, and considers a policy-based actor-critic approach that employs a general class of internal state representation. Importantly, we optimize policy over the extended class of finite-state \textit{stochastic} policies, which were observed to outperform deterministic policies for POMDPs \cite{williams1998experimental}. Actor-critic methods were first mentioned as a potential solution method for POMDPs in \cite{murphy2000survey, yu2005function} and empirically studied in several papers, e.g., ~\cite{lee2020stochastic, srinivasan2018actor,jurvcivcek2011natural,xu2015acis}. However, none of these works provides non-asymptotic convergence guarantees or optimality properties.

\subsection{Notation}
For a sequence $(s_k)_{k\in\mathbb{N}}$ over a set $\mathsf{S}$, the vector $(s_i,s_{i+1},\ldots,s_j)$ for any $i \leq j$ is denoted by $s_i^j$. Let $s^n$ denote $s_0^n$. We denote the cardinality of a finite set $\mathsf{S}$ by $|\mathsf{S}|$. For a countable set $\mathsf{S}$, $\Sigma(\mathsf{S})$ denotes the simplex over $\mathsf{S}$:
    $\Sigma(\mathsf{S}) \defn \{v\in\bR_+^{|\mathsf{S}|}: \sum_{i\in\mathsf{S}}v_i = 1\}.$
For $\xi\in\Sigma(\mathsf{S})$, $\mathsf{supp}(\xi) \defn \{s\in\mathsf{S}:\xi(s) > 0\}$ denotes the support set of $\xi$. For $v\in \bR^{|\mathsf{S}|}$ and $\xi\in\Sigma(\mathsf{S})$, we denote the weighted-$\ell_2$ norm as $\|v\|_\xi\defn\sqrt{\sum_{i\in\mathsf{S}}\xi(i)\left|v(i)\right|^2}.$ For any finite set $\bX$, $\|P-Q\|_{\rm TV}\defn\frac{1}{2}\sum_{x\in\bX}|P(x)-Q(x)|$ denotes the total variation distance, and $\mathfrak{D}_{\rm KL}(P\|Q)$ denotes Kullback-Leibler divergence between two distributions $P,Q\in\Sigma(\bX)$. $\mathcal{B}_2(x, R) \defn \{z\in\bR^d:\|x-z\|_2 \leq R\}$ denotes the $\ell_2$-ball with center $x\in\bR^d$ and radius $R$. For any $w_0\in\bR^d$, $\mathbf{Proj}_{\mathcal{C}}(w_0) = \arg\min_{w\in\mathcal{C}}\|w-w_0\|_2,$ denotes the projection of $w_0$ onto the subset $\mathcal{C}\subset\bR^d$. For a matrix $A\in\bR^{m\times n}$, $A^\dagger$ denotes its Moore-Penrose inverse. $\sigma(X_\theta:\theta\in\Theta)$ denotes the $\sigma$-field generated by a collection of random variables $\{X_\theta:\theta\in\Theta\}$ for an index set $\Theta$.

\section{POMDPs and Finite-State Controllers}\label{sec:pomdp}
We consider a discrete-time dynamical system with an finite but arbitrarily large state space $\bX$, and finite control space $\bU$. $\{(X_k,U_k)\in\bX\times\bU:k\in\mathbb{N}\}$ is a time-homogenous controlled Markov chain, which evolves according to $$\bP(X_{k+1}=x'|X_k=x,U_k=u) = \mathcal{P}(x'|x,u),$$ for any $k\geq 0,~x,x'\in\bX$ and $u\in\bU$,
where $\mathcal{P}$ is the transition kernel. The system state $\{X_k:k\in\mathbb{N}\}$ is available to the controller only through a (noisy) discrete memoryless observation channel:
    $\bP(Y_k=y|X_k=x) = \Phi(y|x),k\in\mathbb{N}, (x,y)\in\bX\times\bY$,
where $Y_k\in\bY$ is the observation and $\Phi(\cdot|x)\in\Sigma(\bY)$ is the observation channel for any $x\in\bX$. The channel is memoryless in the following sense: $$\bP(Y^k=y^k|X^k=x^k) = \prod_{i=0}^k\Phi(y_i|x_i),$$ for any $(x^k,y^k)\in\bX^{k+1}\times\bY^{k+1}$ and $k\geq 0$. The information available to the controller at time $k$ is $H_k = (H_{k-1}, Y_k, U_{k-1})$ with initial $H_0\in\bH$. An admissible policy $\pi = (\mu_0,\mu_1,\ldots)$ is a sequence of mappings $\mu_k:\bH\times\bY^{k}\times\bU^k\rightarrow \Sigma(\bU), k \geq 0$. Applying control $u\in\bU$ at state $x\in\bX$ yields a reward $r(x,u)\in[0,1]$.

\subsection{Value Functions for POMDPs}
For a given admissible policy $\pi\in\Pi_A$, the value function is defined as the expected $\gamma$-discounted total reward given the initial knowledge $h_0\in\bH$: 
\begin{equation}
\mathcal{V}^{\pi}(h_0) = \bE\Big[\sum_{k=0}^\infty \gamma^k r(X_k,U_k)\Big|H_0=h_0\Big].
    \label{eqn:filtered-v-function}
\end{equation}
Similarly, we define the Q-function under $\pi$ as follows: $$\mathcal{Q}^\pi(h_0,u_0) = \bE\Big[\sum_{k=0}^\infty\gamma^k r(X_k,U_k)\Big|H_0=h_0, U_0=u_o\Big],$$ for any $~h_0\in\bH,u_0\in\bU.$ The advantage function is defined as
\begin{equation*}
\cA^\pi(h_0,u_0) = \cQ^\pi(h_0,u_0) - \cV^\pi(h_0),~(h_0,u_0)\in\bH\times\bU.
\end{equation*}

The ultimate objective is to find the optimal policy over the class of admissible policies that maximizes the discounted reward given an initial distribution  $\xi\in\Sigma(\mathsf{H})$, namely,
\begin{equation}
   \max_{\pi\in\Pi_A}~ \cV^\pi(\xi) \defn \bE_{H_0\sim\xi} [\cV^\pi(H_0)].
   \label{eqn:pomdp-objective}
\end{equation}
Note that the optimal controller for a POMDP bases its decisions on $H_k \in \bH\times\bY^{k}\times\bU^k$, thus an exponentially growing memory over time is required for policy optimization, which is known as the \emph{curse of history}.

\subsection{Bayes Filtering and Belief State Formulation}\label{subsec:belief}
Let
\begin{equation}
    b_k(x,h) \defn \bP(X_k = x|H_k = h),
\end{equation}
be the belief state at time $k \geq 0$. We denote $b_k(x,h_k)$ as $b_k(x)$ in short, and the belief can be computed in a recursive way by the following filtering transformation:
\begin{align}
    b_k(x) &= \frac{\sum_{x^\prime}b_{k-1}(x^\prime)\mathcal{P}(x|x^\prime,u_{k-1})\Phi(y_k|x)}{\sum_{x^\prime, x''}b_{k-1}(x^\prime)\mathcal{P}(x''|x^\prime,u_{k-1})\Phi(y_k|x'')} \defn F(b_{k-1}, y_k, u_{k-1})(x),
\end{align}
which follows from the Bayes theorem \cite{kumar2015stochastic, krishnamurthy2016partially}. We denote $k\geq 0$ successive applications of the filtering transformation $F$ as follows:
\begin{equation}
    F^{(k)}(b_0, y_1^k, u_0^{k-1})(x) = b_k(x,h_k).
    \label{eqn:filtering}
\end{equation}
For any $u\in\bU$, let $\tilde{r}(b_k, u) = \sum_{x\in\bX}b_k(x, h_k)r(x, u)$. Then, the problem reduces to a fully observable MDP where $(b_k, u_k)$ forms a controlled Markov chain, and action $u_k$ at belief state $b_k$ yields a reward $\tilde{r}(b_k, u)$ \cite{bertsekas2012dynamic}. Therefore, the techniques for MDPs can, in theory, be applied to solve the POMDP problem \eqref{eqn:pomdp-objective}. On the other hand, the belief $b_k$ is continuous-valued, which makes the policy search problem highly intractable. This constitutes the main challenge in RL for POMDPs \cite{murphy2000survey, krishnamurthy2016partially}.

\subsection{Finite-State Controllers for POMDPs}
In order to address the curse of history and to achieve tractability in solving POMDPs, controllers that employ an internal state to summarize the history are widely used \cite{yu2008near, baxter2001infinite, singh1994learning, murphy2000survey}. 
In this paper, we will mainly focus on this subclass of admissible policies and investigate its performance guarantees. 

\begin{definition}[Internal State Representation]\normalfont
An internal state representation is a pair $(\bZ,\varphi)$ where $\bZ$ is a finite set, and $\varphi$ is a transition kernel such that the internal state $\{Z_k:k\geq 0\}$, which keeps a summary of $H_k$, is a stochastic process over $\bZ$ with the following transition:
\begin{equation*}
\bP(Z_{k+1}=z'|Z_k=z,Y_k=y,U_k=u) = \varphi(z'|z, y, u),~\forall (z',z,y,u)\in\bZ^2\times\bY\times\bU.
\end{equation*}
\end{definition}

\begin{definition}[Finite-State Controller]\normalfont
An admissible policy $\pi = (\mu_0, \mu_1, \ldots)$ such that $\mu_k$ bases its decision on the latest observation $Y_k$ and the internal state $Z_k$ for any $k$, i.e., $\mu_k:(Y_k,Z_k)\mapsto U_k$, is a finite-state controller (FSC). The class of FSCs is denoted as $\Pi_{Z,\varphi}$.
\end{definition}

In this specific case, the initial knowledge of the controller about the system, $h_0\in\bH$, is the vector $(y_0,z_0)\in\bY\times\bZ$, thus $\bH=\bY\times\bZ$. The goal in this paper is to learn an optimal FSC for a given internal state representation $(\bZ, \varphi)$. 

\begin{definition}[Optimal FSC]\normalfont
    For a given $(\bZ,\varphi)$ and prior distribution $\xi\in\Sigma(\bH)$, the optimal FSC is defined as
    \begin{equation}
        \pi^* = \arg\max_{\pi\in\Pi_{\bZ,\varphi}}~\cV^\pi(\xi).
    \end{equation}
\end{definition}

\subsection{Sliding-Window Controllers} 
An important subclass of finite-state controllers is sliding-window controllers (SWC) \cite{loch1998using, yu2008near, sung2017learning, kara2021convergence}, which was shown to achieve good practical performance, particularly in combination with stochastic policies (which we aim to learn in this paper by using the NAC framework) \cite{williams1998experimental}. For a given window-length $n > 0$, the internal state is defined as $Z_k = (Y_{k-n}^{k-1}, U_{k-n}^{k-1}) \in \bY^n\times\bU^n.$
For $n = 0$, the internal state is null, thus the controller bases its decisions at time $k$ only on the last observation $Y_k$, which is called a \emph{reactive} or \emph{memoryless} policy \cite{singh1994learning, yu2005function}. For sliding-window controllers, the initial internal state is $H_0 = (Y_0,Z_0) = (Y_{-n}^0,U_{-n}^{-1})\in\bY^{n+1}\times\bU^n$, thus $\bH = \bY^{n+1}\times\bU^n$.

\section{Natural Actor-Critic for POMDPs}\label{sec:critic}
In this section, we develop a natural actor-critic (NAC) framework in order to find the optimal policy within the class of FSCs.

\subsection{Policy Parameterization}
We consider softmax parameterization for FSCs with linear function approximation. Given a feature set $\Psi=\{\psi(u,y,z)\in\bR^d:(u,y,z)\in\bY\times\bZ\times\bU\}$,
\begin{equation}
    \pi_\theta(u|y, z) = \frac{\exp\left(\theta^\top\psi(u, y, z)\right)}{\sum_{u^\prime\in\bU}\exp\left(\theta^\top\psi(u',y,z)\right)},
    \label{eqn:policy-parameterization}
\end{equation}
for all $(u, y, z)\in\bU\times\bY\times\bZ$. Under the observation and internal state pair $(y,z)\in\bY\times\bZ$, the controller makes a randomized decision $u\sim\pi_\theta(\cdot|y,z)$.

\subsection{Sampling}
We define the discounted state-action visitation distribution under $\pi$ as $$\mathbf{d}_{h_0}^\pi(y,z) \defn(1-\gamma) \sum_{k=0}^\infty\gamma^k\bP^\pi(Y_k=y,Z_k=z|H_0=h_0),~\mbox{for}~(y,z),h_0\in\bY\times\bZ,$$ For any initial distribution $\xi\in\Sigma(\bY\times\bZ)$, we denote $\mathbf{d}_\xi^\pi(y,z) = \bE_{H_0\sim \xi}[\textbf{d}_{H_0}^\pi(y,z)],~(y,z)\in\bY\times\bZ$.
\begin{assumption}[Sampling oracle]\normalfont
We assume that the controller is able to obtain an independent sample $H_0\sim\mathbf{d}_\xi^\pi$ at any time.
\label{assumption:sampling-oracle}
\end{assumption}
The sampling procedure is specified in Remark \ref{remark:sampling}, which extends sampling from state-visitation distribution for MDPs \cite{konda2003onactor, agarwal2021theory}.

\subsection{Natural Actor-Critic Algorithm for Finite-State Controllers}
NAC algorithm, summarized in Algorithm \ref{alg:nac}, works as follows. We initialize the policy optimization at the max-entropy policy by setting $\theta_0 = {0}$. At iteration $t \geq 0$, the policy parameter is denoted by $\theta_t$, and the corresponding policy is $\pi_t := \pi_{\theta_t}$. NAC algorithm consists of the following steps:

\textbf{Step 1: (Critic)} Obtain an approximate state-action value function $\widehat{\cQ}_K^{\pi_t}$ by using multi-step TD learning, as described in Algorithm \ref{alg:td}, which is sufficient to compute:
\begin{equation}
    \widehat{\cA}_K^{\pi_t}(u,y,z) = \widehat{\cQ}_K^{\pi_t}(u,y,z) - \widehat{\cV}_K^{\pi_t}(y,z),
\end{equation}
where $\widehat{\cV}_K^{\pi_t}(y,z) = \sum_{u\in\bU}\pi_t(u|y,z)\widehat{\cQ}_K^{\pi_t}(u,y,z),$ for $K$ critic steps per iteration.

\begin{algorithm}[tb]
  \caption{$m$-step TD learning for POMDPs}
  \label{alg:td}
    \begin{algorithmic}
   \STATE {\bfseries Input:} $\pi\in\Pi_{\bZ,\varphi}$ policy, $m$: memory size, $\alpha$: step-size, $K$: time-horizon, $R$: proj. radius\\
   \FOR{$k=0$ {\bfseries to} $K-1$}
        \STATE Sample $h_0(k) = (y_0(k),z_0(k))\sim\mathbf{d}_\xi^{\pi_k}$,\\
        \FOR{$i=0$ to $m-1$}
            \STATE Control: $u_i(k)\sim\pi(\cdot|y_i(k), z_i(k))$,\\
            \STATE Observation: $y_{i+1}(k)\sim\Phi(\cdot|x_{i+1}(k))$,\\
            \STATE Update: $z_{i+1}(k) \sim \varphi(\cdot|u_{i}(k),y_{i}(k), z_{i}(k))$,\\
            \STATE Receive reward $r_i(k) = r(x_i(k), u_i(k))$.
        \ENDFOR
        \STATE Compute semi-gradient for $\widehat{\cQ}_{k,i}^\pi \defn \langle\beta_k,\psi_i(k)\rangle$: $$g_k = \Big(\sum_{i=0}^{m-1}\gamma^ir_i(k)+\gamma^m\widehat{\cQ}_{k,m}^\pi-\widehat{\cQ}_{k,0}^\pi\Big)\nabla_\beta\widehat{\cQ}_{k,0}^\pi.$$\\
        \STATE Update: ${\beta}_{k+1} = \mathbf{Proj}_{\mathcal{B}_{2}(0,R)}(\beta_k + \alpha g_k).$
    \ENDFOR
  \STATE Return $\widehat{\cQ}_K^\pi(\cdot) = \langle\frac{1}{K}\sum_{k<K}\beta_k,\psi(\cdot)\rangle$.
  \end{algorithmic}
\end{algorithm}

\begin{algorithm}[bht]
    \caption{Finite-State Natural Actor Critic: {\tt FS-NAC}}
    \label{alg:nac}
\begin{algorithmic}[1]
   \STATE {\bfseries Input:} $T$: time-horizon, $N$: number of SGD steps, $\zeta,\eta$: step-sizes, $R$: projection radius.\\
   \STATE {\bfseries Initialization:} $\theta_0 = 0;$\hfill \textbackslash\textbackslash  {\tt ~max-entropy policy}\\
   \FOR{$t=0$ {\bfseries to} $T-1$}
        \STATE Obtain $\widehat{\cQ}_T^{\pi_t}$ by $m$-step TD learning (Alg. \ref{alg:td})\\
        \STATE Initialize: $w_t(0) = 0$\\
        \FOR{$k=0$ {\bfseries to} $N-1$}
            \STATE Obtain $(y_k,z_k)\sim \mathbf{d}_\xi^{\pi_t}$ and $u_k\sim\pi_t(\cdot|y_k,z_k)$\\
            \STATE $\widetilde{w}_{t}(k+1) = w_t(k)-\zeta\cdot \nabla \mathcal{L}_t(w_t(k);u_k,y_k,z_k)$, \label{alg-sgd:line1}
            \STATE $w_t(k+1)=\mathbf{Proj}_{\cB_2(0,R)}\big(\widetilde{w}_t(k+1)\big)$ \label{alg-sgd:line2}
        \ENDFOR
        \STATE $\theta_{t+1} = \theta_t + \eta \frac{1}{N}\sum_{k<N}w_t(k)$.
    \ENDFOR
    \end{algorithmic}

\end{algorithm}

\textbf{Step 2: (Actor)} Let $\mathfrak{H}_t$ be the $\sigma$-field generated by the samples used up to (excluding) iteration $t$, and in the computation of $\widehat{\cQ}_K^{\pi_t}$. Then, for any $t\in\mathbb{N}$, we aim to solve
\begin{equation}
    w^\star_t \in \underset{w\in\mathcal{B}_d(0,R)}{\min} \bE[\mathcal{L}_t(w; U, Y, Z)|\mathfrak{H}_t],
    \label{eqn:cfa}
\end{equation}
where $\mathcal{L}_t(w; u, y, z) = \Big(\langle\nabla\log\pi_t(u|y,z),w\rangle-\widehat{\cA}_K^{\pi_t}(u, y,z)\Big)^2$. In order to solve \eqref{eqn:cfa} by using samples, we initialize $w_t(0) = 0$ and utilize stochastic gradient descent (SGD) as Line \ref{alg-sgd:line1}-\ref{alg-sgd:line2} in Algorithm \ref{alg:nac}. After $N$ iterations, the policy is updated as
    $\theta_{t+1} = \theta_t + \eta\cdot\frac{1}{N}\sum_{k=0}^{N-1}w_t(k).$

\begin{discussion}\normalfont
In the following, we provide an intuitive explanation behind the choices of the methods used for the actor and critic in Algorithm \ref{alg:nac}.
    \smallskip
    \newline \textbf{Why $m$-step TD learning as critic?} The main challenge in estimating $\cQ^\pi$ for a given policy $\pi\in\Pi_{\bZ,\varphi}$ by using temporal difference (TD) learning methods is the so-called \emph{perceptual aliasing} phenomenon \cite{shani2013survey, singh1994learning}, which refers to receiving the same observation $y\in\bY$ for two different (hidden) states $x,x'\in\bX$ with non-zero probability due to the noisy observation channel $\Phi$. As a result of perceptual aliasing, TD(0) does not converge to $\cQ^\pi$ \cite{singh1994learning} since $\mathcal{T}^\pi\cV^\pi\neq \cV^\pi$ for POMDPs, where the Bellman operator $h_0\mapsto\mathcal{T}^\pi(h_0)$ is defined as $$(\mathcal{T}^\pi \cV)(h_0) \defn \bE^\pi[r(X_0,U_0)+\gamma \cV(H_1)|H_0=h_0].$$
    To address perceptual aliasing at the expense of increased sample complexity, we employ multi-step TD learning. The impact of this choice is explicitly characterized in Theorem \ref{thm:m-tdl}.
    \smallskip
    \newline \textbf{Why SGD for policy update?} The exact minimizer $w_t^\star$ of the least-squares problem \eqref{eqn:cfa} yields the natural policy gradient $(1-\gamma)G_t^{\dagger}\nabla V^{\pi_t}(\xi)$, where $$G_t=\bE_{\substack{ (Y,Z)\sim\mathbf{d}_\xi^{\pi_t}\\ U\sim\pi_t(\cdot|Y,Z)}}[\nabla_\theta\log\pi_t(U|Y,Z)\nabla_\theta^\top\log\pi_t(U|Y,Z)],$$ is the Fisher information matrix under $\pi_t$ \cite{kakade2001natural}. In the absence of system model, we utilize projected SGD to approximate $w_t^\star$ by using samples from the system.
\end{discussion}

 \section{Finite-Time Bounds for NAC for POMDPs}\label{sec:analysis}
In this section, we will provide finite-time performance bounds for {\tt FS-NAC}, and identify the impacts of partial observability, function approximation and internal state representation on the global optimality.

Without loss of generality, we assume that $\sup_{(u,y,z)\in\bU\times\bY\times\bZ}\|\psi(u,y,z)\|_2 \leq 1.$ Given a projection radius $R > 0$, the function space defined by $\Psi$ is
\begin{equation*}
    \mathcal{F}_{\Psi}^R = \{(u,y,z)\mapsto\langle\beta, \psi(u,y,z)\rangle:\beta\in\mathcal{B}_{2}(0,R)\}.
\end{equation*}
For any $\pi\in\Pi_{\bZ,\varphi}$, $(\mathbf{d}_\xi^\pi\otimes\pi)(u,y,z) \stackrel{\rm def}{=} \mathbf{d}_\xi^\pi(y,z)\pi(u|y,z),$ denotes the discounted observation-internal state-action visitation distribution under $\pi$.

\subsection{Performance Bounds for the Critic}
In the following, we present finite-time performance bounds for the $m$-step TD learning algorithm for any given FSC $\pi\in\Pi_{\bZ,\varphi}$.

\begin{theorem}[Finite-time bounds for $m$-step TD learning]\normalfont
For any $\pi\in\Pi_{\bZ,\varphi}$ and $m \geq 1$, we have the following bound under Algorithm \ref{alg:td} with $\alpha = \frac{1}{\sqrt{K}}$ and given radius $R>0$:
\begin{align*}
    \sqrt{\bE\Big[\|\cQ^\pi-\widehat{\cQ}^\pi_K\|_{\mathbf{d}_\xi^\pi\otimes\pi}^2\Big]} &\leq \underbrace{\frac{\cO(\frac{R+1}{1-\gamma})}{K^{1/4}\sqrt{1-\gamma^m}}}_{\rm TD~learning~error} + \underbrace{\frac{\epsilon_{\rm app}(R)}{1-\gamma^m}}_{\rm approx.~error}+ \underbrace{\epsilon_{\rm pa}^\pi(\gamma, m, R)}_{\substack{\rm  perceptual~aliasing\\\rm error}}
\end{align*}
where $\widehat{\cQ}_K^\pi(\cdot)=\langle\frac{1}{K}\sum\limits_{k<K}\beta_k,\psi(\cdot)\rangle$ is the output of Alg. \ref{alg:td},  $$\epsilon_{\rm app}(R) \defn \min_{f\in\mathcal{F}_\Psi^R}\|f-\cQ^\pi\|_{\mathbf{d}_\xi^\pi\otimes \pi},$$ is the function approximation error, and the perceptual aliasing error is $$\epsilon_{\rm pa}(\gamma,m,R) = \cO\Big(\gamma^{m/2}{\mathsf{poly}}(R,\frac{1}{1-\gamma})\Big).$$

\label{thm:m-tdl}
\end{theorem}

The detailed form of $\epsilon_{\rm pa}(\gamma,m,R)$ and the proof of Theorem \ref{thm:m-tdl} are in Appendix \ref{app:m-tdl}.

\begin{discussion}\normalfont
From Theorem \ref{thm:m-tdl}, we have the following observations.

    1. The term $\epsilon_{\rm pa}(\gamma, m, R)$ is unique to POMDPs and does not appear in the case of (fully observable) MDPs. Specifically, it quantifies the impact of perceptual aliasing on policy evaluation in the partially observed setting.

    2. For any $\gamma$-discounted-reward POMDP, $\epsilon_{\rm pa}(\gamma, m, R)$ decays at a geometric rate $\cO(\gamma^{m/2})$. As such, the required $m$ to achieve a given target error $\epsilon$ is $\cO\left(\log_{1/\gamma}\left(\frac{1}{(1-\gamma)\epsilon}\right)\right)$.

    3. The hidden terms in $\epsilon_{\rm pa}$, which measure the discrepancy between the belief distributions under perfect observations and partial observations, vanish in the case of MDPs, making $\epsilon_{\rm pa}(\gamma,m,R)=0$ (see Appendix \ref{app:m-tdl}). Particularly, for $I_k=(Y_k,Z_k)$, the hidden terms are the factors of the following, which both vanish in MDPs, and are bounded otherwise: $\sum_{k=0}^\infty\gamma^{km}\|b_0(\cdot,I_{(k+1)m})-b_{(k+1)m}(\cdot)\|_{\rm TV}$ and $\|\delta_m^\pi\otimes\pi - \mathbf{d}_\xi^\pi\otimes\pi\|_{\rm TV},$ where $\delta_m^\pi(y,z)\defn\bE_{H_0\sim\mathbf{d}_\xi^\pi}\Big[\bP\big(I_m=(y,z)|H_0\big)\Big].$ Thus, by setting $m=1$ and considering MDPs, Theorem \ref{thm:m-tdl} reduces to the finite-time bounds for projected TD(0) in the iid setting \cite{bhandari2018finite}.

    4. At each iteration, $m$ samples are used by $m$-step TD learning. Thus, the sample complexity of $m$-step TD learning is $mK$. As such, there is a tradeoff between accuracy (to avoid perceptual aliasing) and sample complexity in policy evaluation. As such, the sample complexity to achieve a target error $\epsilon > 0$ is $\widetilde{\cO}\Big(\frac{1}{(1-\gamma)^2\epsilon^4}\Big)$.
    
\end{discussion}

\subsection{Finite-Time Bounds for {\tt FS-NAC} for POMDPs}
First, we consider the performance of the natural actor-critic for a general finite-state controller, and characterize the function approximation error, statistical error and inference error.

\begin{definition}\label{def:cfa}\normalfont
    For a given set of feature vectors $\Psi$ and projection radius $R > 0$, let
    \begin{equation*}
        \ell_{\rm CFA}(R, \Psi) \defn \sup_{t\geq 0}~\bE\Big[\inf_{f\in\mathcal{F}_\Psi^R}\|f-\cQ^{\pi_t}\|_{\mathbf{d}_\xi^{\pi_t}\otimes\pi_t}\Big|\theta_t\Big],
    \end{equation*}
    be the compatible function approximation error.
\end{definition}

Definition \ref{def:cfa} characterizes the representation power of the function approximation used in policy parameterization and policy evaluation. 

\begin{assumption}[Concentrability coefficient]\label{assumption:cc}
Let
\begin{equation}
    C_t = \bE\Big[\Big|\frac{(\mathbf{d}_\xi^{\pi^*}\otimes\pi^*)(U,Y,Z)}{({\mathbf{d}}_\xi^{\pi_t}\otimes\pi_t)(U,Y,Z)}\Big|^2\Big|\theta_t\Big],
\end{equation}
where the conditional expectation is over $(Y,Z)\sim\mathbf{d}_\xi^{\pi_t}$ and $U\sim\pi_t(\cdot|Y,Z)$. We assume there exists $\bar{C}_\infty < \infty$ such that $\sup_{t\in\mathbb{N}}\bE[C_t]\leq \bar{C}_\infty.$
\end{assumption}

\begin{discussion}[Difficulty of exploration in POMDPs]\normalfont
    Assumption \ref{assumption:cc}, which asserts that the concentrability coefficient is bounded for all iterations throughout the policy optimization, is standard in PG/NPG methods for POMDPs \cite{agarwal2021theory, cai2019neural, mei2020global,thrun1992cient,bhandari2019global}. On the other hand, it is significantly stronger for POMDPs since the probability simplex is over $\bU\times\bY\times\bZ$, which may be significantly larger compared to $\bX\times\bU$, which is the case in MDPs. This also suggests the following dilemma: in order to achieve better performance, one needs a larger internal state space (i.e., memory) $\bZ$, but then the exploration becomes more complex as measured by $\bar{C}_\infty$.
\end{discussion}

\begin{theorem}[Finite-time bounds for {\tt FS-NAC}]\normalfont
Consider the finite-state natural actor-critic with internal state $(\bZ,\varphi)$. Then, Algorithm \ref{alg:nac} with step-sizes $\alpha = \frac{1}{\sqrt{K}}$, $\zeta = \frac{R\sqrt{1-\gamma}}{\sqrt{2N}}$ and  $\eta = \frac{1}{\sqrt{T}}$ achieves the following bound:
\begin{equation*}
    (1-\gamma)\min_{t<T}\bE[\cV^{\pi^*}(\xi)-\cV^{\pi_t}(\xi)] \leq \frac{\log|\bU|+R^2}{\sqrt{T}}
    + 8\bar{C}_\infty\Big({\epsilon_{\rm critic}^m(K, R)} + {\epsilon_{\rm actor}(N, R)}\Big)+ {2\epsilon_{\rm inf}^{\pi^*}(\xi),}
\end{equation*}
where
\begin{equation*}
\epsilon^m_{\rm critic}(K,R) = \frac{\cO(\frac{R+1}{1-\gamma})}{K^{1/4}(1-\gamma^m)^{1/2}}+\frac{\ell_{\rm CFA}(R, \Psi)}{1-\gamma^m}+\frac{1}{T}\sum_{t=0}^{T-1}\bE[\epsilon_{\rm pa}^{\pi_t}(\gamma, m, R)],\end{equation*} 
is the error in the critic, $$\epsilon_{\rm actor}(N,R) = \sqrt{\frac{2-\gamma}{1-\gamma}\cdot\frac{R}{\sqrt{N}}}+\ell_{\rm CFA}(R, \Psi),$$ is the error in the actor updates, and
\begin{equation*}
    \epsilon_{\rm inf}^{\pi^*}(\xi) = \bE^{\pi^*}\Big[\sum_{k=0}^\infty\gamma^k\|b_k(\cdot)-b_0(\cdot,I_k)\|_{\rm TV}\Big|I_0\sim\xi\Big],
\end{equation*}
with $I_k\defn(Y_k,Z_k)$ is the inference error for using the internal state representation $(\bZ,\varphi)$.
\label{thm:nac-fsc}
\end{theorem}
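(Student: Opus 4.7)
The plan is to follow a standard mirror-descent template for natural policy gradient, adapted to the finite-state controller setting, and to decompose the suboptimality into an optimization error, an actor--critic approximation error, and an inference error due to partial observability.

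First, I would establish a performance difference lemma tailored to FSCs. In a fully observable MDP one has the identity
\[
V^{\pi^*}(\xi)-V^{\pi}(\xi)=\tfrac{1}{1-\gamma}\,\bE_{(y,z)\sim\ds_\xi^{\pi^*}}\bE_{u\sim\pi^*(\cdot|y,z)}\bigl[A^{\pi}(y,z,u)\bigr].
\]
For an FSC, $(y,z)$ is not Markovian: the true continuation value depends on the belief $b_k(\cdot,h_k)$ given the whole history, not on $b_0(\cdot,I_k)$, which is the posterior given only $I_k=(y_k,z_k)$. Telescoping the discounted return along a trajectory of $\pi^*$ and bounding rewards by $r_{max}$ shows that this mismatch contributes exactly the $2r_{max}\Gamma^{\pi^*}(\xi)$ term, while the remainder is the MDP-style advantage expression above.

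Second, I would use the mirror-descent view of NPG. Under softmax parameterization the update $\theta_{t+1}=\theta_t+\eta\bar w_t$, with $\bar w_t=\tfrac{1}{N}\sum_{k<N}w_t(k)$, induces $\pi_{t+1}(u|y,z)\propto\pi_t(u|y,z)\exp(\eta\langle\bar w_t,\psi(y,z,u)\rangle)$. A three-point KL inequality against the comparator $\pi^*$, together with the projection bound $\|\bar w_t\|_2\leq R$ and $\eta=1/\sqrt{T}$, yields the $(\log|\bU|+R^2)/\sqrt{T}$ optimization contribution after averaging over $t$ and taking $\min_{t<T}$. The crucial step is to relate $\langle\bar w_t,\psi(y,z,u)\rangle$ to the true advantage $A^{\pi_t}(y,z,u)$: using $\nabla\log\pi_t(u|y,z)=\psi(y,z,u)-\bE_{u'\sim\pi_t}\psi(y,z,u')$ and the centering $\bE_{u\sim\pi_t}[A^{\pi_t}(y,z,\cdot)]=0$, the gap splits into (i) the projected SGD error on the quadratic $\mathcal{L}_t$ with step-size $\zeta=R\sqrt{1-\gamma}/\sqrt{2Nr_{max}}$, which is exactly the $\sqrt{(2-\gamma)Rr_{max}/((1-\gamma)\sqrt{N})}$ piece of $\epsilon_{\tt A}(N,R)$; (ii) the compatible function approximation error $\bar\ell_{\tt C}(R,\Psi)$; and (iii) the critic error $\|Q^{\pi_t}-\overline Q_K^{\pi_t}\|_{\ds_\xi^{\pi_t}\circ\pi_t}$, bounded iteration-by-iteration by Theorem~\ref{thm:m-tdl} and averaged in $t$ to yield $\epsilon_{\tt C}^m(K,R)$.

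Third, converting these on-policy $L^2$ bounds into an expectation under $\ds_\xi^{\pi^*}\circ\pi^*$ proceeds via Cauchy--Schwarz against the Radon--Nikodym ratio, which introduces $\sqrt{C_t}$ and is absorbed into the $8\bar C_\infty$ prefactor after applying $2ab\leq a^2+b^2$ and Assumption~\ref{assumption:cc}. Combining the three error contributions with the FSC performance difference identity, multiplying by $1-\gamma$, and taking $\min_{t<T}$ produces the stated bound. The main obstacle will be the FSC-tailored performance difference step: because $V^{\pi_t}(y,z)$ is not a genuine value function over a Markov state, the Bellman-style telescoping leaves residual terms involving the TV distance $d_{TV}(b_k(\cdot,h_k),b_0(\cdot,I_k))$ that must be summed geometrically in $k$ and collapsed into the single inference functional $\Gamma^{\pi^*}(\xi)$. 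A secondary technical point is bookkeeping the projected SGD so that its error in $w_t$ translates to an $L^2$ error in $\langle\bar w_t,\psi\rangle-\bar A_K^{\pi_t}$ without losing factors in $1/(1-\gamma)$, which drives the specific choice of $\zeta$.
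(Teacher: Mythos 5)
Your proposal follows essentially the same route as the paper's proof: a performance difference lemma for FSCs obtained by Bellman-style telescoping whose residual belief-mismatch terms collapse into $2r_{max}\Gamma^{\pi^*}(\xi)$, a KL-potential (mirror-descent) drift analysis of the softmax NPG update against the comparator $\pi^*$ yielding the $(\log|\bU|+R^2)/\sqrt{T}$ term, a decomposition of the advantage-estimation gap into projected-SGD error, compatible function approximation error, and the $m$-step TD critic error from Theorem~\ref{thm:m-tdl}, and a Cauchy--Schwarz change of measure absorbed into $\bar{C}_\infty$ via Assumption~\ref{assumption:cc}. The only cosmetic difference is that you invoke a three-point KL inequality where the paper uses the smoothness of $\log\pi_\theta$ (a second-order Taylor bound), which leads to the same drift inequality.
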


The proof of Theorem \ref{thm:nac-fsc} is presented in Appendix \ref{app:fs-nac}. The general strategy in the proof is to use the Lyapunov function $$\Lambda(\pi) = \sum_{(y,z)\in\bY\times\bZ}\mathbf{d}_\xi^{\pi^*}(y,z)\mathfrak{D}_{\rm KL}\big(\pi^*(\cdot|y,z)\big\|\pi(\cdot|y,z)\big),$$ for finite-state controllers, akin to the case of MDPs \cite{agarwal2021theory}. However, the long-term statistical dependencies due to the use of memory to solve POMDPs constitutes the main challenge. In particular, performance difference lemma for POMDPs (see Lemma \ref{lemma:pdl} in Appendix \ref{app:fs-nac}) is challenging under partial observability.

\begin{discussion}\normalfont
    The bound in Theorem \ref{thm:nac-fsc} can be decomposed into three parts.
\begin{itemize}[leftmargin=*]
\item \textit{Inference error:} The inference error at stage $k$ is 
$\|b_k(\cdot,H_k)-b_0(\cdot,I_k)\|_{TV}$,
where $H_k$ is the complete history up to time $k$ and $I_k=(Y_k,Z_k)$ is the information used by the controller. If the internal state $\{Z_k:k\geq 0\}$ with $(\bZ,\varphi)$ provides a good ``temporal" approximation, i.e., summarizes the history properly, then the inference error is small. The error is due to employing an internal state to compress the history. For the specific case of sliding-window controllers, we will expand this discussion in the following subsection.

    \item \textit{Error in actor}: This corresponds to the combination of statistical error due to using \textbf{Proj-}SGD for policy update, and using a function approximation scheme for policy parameterization. To achieve $\epsilon$-optimality up to a function approximation error, which depends on the expressive power of $\Psi$, one should choose $N = O(\frac{1}{\epsilon^4})$. 
    
    \item \textit{Error in critic}: This corresponds to the error in the critic in every stage of the policy optimization. Note that, by Theorem \ref{thm:m-tdl}, $\frac{1}{T}\sum_t\bE[\epsilon_{\rm PA}^{\pi_t}(\gamma,m,R)] = \exp(-\Omega(m)).$ Therefore, in order to achieve $\epsilon$-optimality, one has to choose $K = \cO(\frac{1}{\epsilon^4})$, and apply $m$-step TD learning with $m = \cO\big(\log_{1/\gamma}(1/\epsilon)\big)$ to control the error due to perceptual aliasing in policy evaluation (see Section \ref{sec:critic}).
    \end{itemize}
\end{discussion}

\subsection{Memory-Inference Error Tradeoff for Sliding-Window Controllers}
The choice of internal state representation determines a tradeoff between memory complexity and the inference error $\epsilon_{\rm inf}^{\pi^*}$. For the special class of sliding-window controllers with block-length $n \geq 0$, we can explicitly characterize this tradeoff under the following conditions.

\begin{condition}[Persistence of excitation under $\pi^*$]\normalfont
There exist $\rho\in(0,1)$ and $\bar{\mu}\in\Sigma(\bU)$ such that $\rho\cdot\bar{\mu}(u) \leq \pi^*(u|y,z) \leq \frac{1}{\rho}\cdot\bar{\mu}(u),$ for all $(u,y,z)\in\bU\times\bY\times\bZ$.
\label{cond:poe}
\end{condition}
Condition \ref{cond:poe} implies that $\mathsf{supp}(\pi^*(\cdot|y,z))$ is the same for all $(y,z)\in\bY\times\bZ$, which holds if $\pi^*$ satisfies the persistence of excitation condition. Note that, unlike MDPs, there may be only strictly non-deterministic policies for POMDPs \cite{singh1994learning}. Furthermore, if one employs entropy regularization within the NAC framework, which is commonly employed in practice, $\pi^*$ automatically satisfies Condition \ref{cond:poe} \cite{shani2020adaptive, cayci2021linear, lee2020stochastic}.
\begin{condition}[Minorization-majorization]\normalfont
There exist $\epsilon_0\in(0,1)$, $m_0\geq 1$ and $\nu\in\Sigma(\bX\times\bY^{m_0}\times\bU^{m_0})$ such that for all $x_{m_0},x_0\in\bX,(y_1^{m_0},u^{{m_0}-1})\in\mathsf{Y}^{m_0}\times \mathsf{U}^{m_0}$,
\begin{equation*}
    \epsilon_0\cdot\nu(x_{m_0},y_1^{m_0},u^{m_0-1}) \leq \widetilde{P}_{m_0}(x_{m_0},y_1^{m_0},u^{m_0-1}|x_0) \leq \frac{1}{\epsilon_0}\cdot\nu(x_{m_0},y_1^{m_0},u^{m_0-1})
\end{equation*}
 where 
$\widetilde{P}_m(x_m,y_1^m,u^{m-1}|x_0)=\sum_{x_1^{m-1}}\prod_{j=0}^{m-1}\bar{\mu}(u_j)\cP(x_{j+1}|x_j,u_j)\Phi(y_{j+1}|x_{j+1}).$
\label{cond:minorization}
\end{condition}
Condition \ref{cond:minorization} is an ergodicity condition, and one of the implications is that every hidden state is visited within a finite time interval. This is akin to the standard ergodicity conditions in \cite{ortner2020regret, wei2021last} for MDPs, but it is \textit{considerably} stronger than them because of the complications due to partial observability. For hidden Markov chains (HMCs), along with a non-degeneracy condition on $\Phi$, Condition \ref{cond:minorization} implies filter stability for any finite $\bX,\bY$ when $\{X_k:k\geq 0\}$ is irreducible and aperiodic \cite{van2008hidden}. For further discussion on Conditions \ref{cond:poe}-\ref{cond:minorization}, see Appendix \ref{app:filter-stability}.

The following result characterizes the tradeoff between the inference error and the memory complexity for the specific case of sliding-window controllers.

\begin{proposition}[Memory-performance tradeoff]
    Under Conditions \ref{cond:poe}-\ref{cond:minorization}, we have 
    \begin{equation}
        \epsilon_{\rm inf}^{\pi^*}(\xi) \leq \frac{1}{(1-\gamma)}\cdot \cO\Big((1-\rho^{2m_0-2}\cdot\epsilon_0^2)^{\lfloor\frac{n}{m_0}\rfloor}\Big),~n\geq 1.
    \end{equation}
    \label{prop:filter-stability}
\end{proposition}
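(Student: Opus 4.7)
The plan is to reduce the per-stage inference error to a classical filter-stability question, then invoke a Dobrushin-type contraction of the Bayes filter under Condition~\ref{cond:minorization}, and finally sum the resulting uniform geometric bound against the $\gamma^k$ weights. The central observation is that for every $k\geq 0$, both $b_k(\cdot,h_k)$ and $b_0(\cdot,I_k)$ are outputs of the Bayes filter driven by the \emph{same} last $n+1$ observations $y_{k-n}^{k}$ and $n$ controls $u_{k-n}^{k-1}$, differing only in the prior used at time $k-n$: the true filter enters the window with the posterior $b_{k-n}(\cdot,h_{k-n})$ accumulated from the full history, whereas the sliding-block filter enters with the ``reset'' prior implicit in $b_0(\cdot,\cdot)$. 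Consequently, $d_{TV}\bigl(b_k(\cdot,h_k),b_0(\cdot,I_k)\bigr)$ is exactly the stability error of an $n$-step Bayes filter applied to a common observation-control sequence but two different priors.

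Next, I would transfer the minorization-majorization in Condition~\ref{cond:minorization}, which is stated for $\bar{\mu}$-driven dynamics, to the $\pi^*$-driven dynamics. Condition~\ref{cond:poe} gives $\pi^*(u|y,z)/\bar{\mu}(u)\in[\alpha,\alpha^{-1}]$ uniformly in $(u,y,z)$; multiplying over the $m_0-1$ controls in a block changes the bound by at most a factor $\alpha^{-(m_0-1)}$ on the upper side and $\alpha^{m_0-1}$ on the lower side, giving
$$
\alpha^{m_0-1}\epsilon_0\,\nu(\cdot) \;\leq\; P^{\pi^*}_{m_0}\!\bigl(x_{m_0},y_1^{m_0},u^{m_0-1}\,\big|\,x_0,z_0\bigr) \;\leq\; \alpha^{-(m_0-1)}\epsilon_0^{-1}\,\nu(\cdot)
$$
uniformly in $(x_0,z_0)$. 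With this block-level Doeblin condition in hand, I would then invoke the classical filter-stability result (in the spirit of Le Gland--Oudjane and Van Handel): under such an $m_0$-step minorization-majorization, one $m_0$-step Bayes update is a strict contraction in $d_{TV}$ with Dobrushin coefficient at most $1-\alpha^{2(m_0-1)}\epsilon_0^2$. Iterating over $\lfloor n/m_0\rfloor$ independent blocks and using $d_{TV}\leq 1$ at the boundary yields, uniformly in $k$,
$$
d_{TV}\bigl(b_k(\cdot,h_k),\,b_0(\cdot,I_k)\bigr) \;\leq\; 2\bigl(1-\alpha^{2(m_0-1)}\epsilon_0^2\bigr)^{\lfloor n/m_0\rfloor}.
$$

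Plugging this uniform bound into the definition of $\Gamma^{\pi^*}(\xi)$ and summing the geometric series in $k$ delivers
$$
\Gamma^{\pi^*}(\xi) \;\leq\; \sum_{k\geq 0}\gamma^k\cdot 2\bigl(1-\alpha^{2(m_0-1)}\epsilon_0^2\bigr)^{\lfloor n/m_0\rfloor} \;=\; \frac{2}{1-\gamma}\bigl(1-\alpha^{2(m_0-1)}\epsilon_0^2\bigr)^{\lfloor n/m_0\rfloor},
$$
which matches the claim after noting $2(m_0-1)=2m_0-2$ and absorbing the constant $2$ into the $O(\cdot)$. The main obstacle is the per-block contraction itself: the minorization-majorization transferred in the previous step trivially gives a contraction of the \emph{unnormalized} Bayes update, but the actual filter is \emph{normalized}, and it is precisely the normalization step that forces the contraction constant to depend on the \emph{square} of $\alpha^{m_0-1}\epsilon_0$ rather than on its first power. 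Establishing this requires a careful application of the Hilbert/Dobrushin contraction coefficient for positive kernels, leveraging that both filters process the same observations and controls so that the observation likelihoods cancel in the TV comparison---this is the technical core of the argument and the place where all three of Conditions~\ref{cond:poe}-\ref{cond:minorization} must be used simultaneously.
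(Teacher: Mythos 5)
Your overall strategy coincides with the paper's: reduce the stage-$k$ inference error to the stability of the $n$-step Bayes filter run on the common window $(y_{k-n+1}^k,u_{k-n}^{k-1})$ from two different priors (cf.\ \eqref{eqn:inference-error-sb}), transfer Condition \ref{cond:minorization} from $\bar{\mu}$-driven to $\pi^*$-driven block transitions via Condition \ref{cond:poe} at a cost of $\alpha^{m_0-1}$ on each side, obtain a per-block contraction factor $1-(\epsilon_0\alpha^{m_0-1})^2$, and finish by iterating over $\lfloor n/m_0\rfloor$ blocks and summing the geometric series in $\gamma$ to get the $1/(1-\gamma)$ factor. Up to constants, all of that matches the paper's proof.

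The gap is exactly at the point you yourself flag as ``the technical core'': you invoke the classical filter-stability results (Le Gland--Oudjane, Van Handel) for the per-block contraction of the normalized filter, but those results are stated for hidden Markov chains, where the data at time $k$ depends only on $x_k$; here the controls $u_j$ inside the window are generated by $\pi^*$ as functions of $(y_j,z_j)$, hence of past observations and controls, so the process seen by the filter is not an HMC and the off-the-shelf contraction does not apply --- this is precisely why the paper re-proves filter stability for POMDPs rather than citing it. What is actually needed, and what the paper supplies, is: (i) conditional Markovianity of $\{x_k\}$ given the whole record $h_n$, established through the backward variables of Definition \ref{def:b-variable} (Lemma \ref{lemma:conditional-markovianity}); (ii) a Doeblin minorization of the resulting $m_0$-step \emph{smoothing} kernel $\kappa_{\ell|n}^{m_0}$ in \eqref{eqn:kernel-min-1} with constant $(\epsilon_0\alpha^{m_0-1})^2$ and a dominating measure independent of the starting state, which uses the \emph{lower} bound of Condition \ref{cond:minorization} in the numerator and the \emph{upper} bound in the normalizing denominator --- that is where the square arises, and where it matters that the backward likelihood $\beta_{(\ell+1)m_0|n}$ does not depend on $x_{\ell m_0}$ (Lemma \ref{lemma:smoothing-kernel-minorization}); and (iii) the contraction of left multiplication by a minorized kernel (Lemma \ref{lemma:minorization}). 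Your remark that the observation likelihoods ``cancel in the TV comparison'' is not quite what happens: they do not cancel, they are absorbed into the state-independent dominating measure of the smoothing kernel. Without steps (i)--(ii), your per-block contraction with coefficient $1-\alpha^{2m_0-2}\epsilon_0^2$ is asserted rather than proved; the remainder of your argument (the transfer via Condition \ref{cond:poe}, the iteration over blocks, and the summation in $k$) is correct and mirrors the paper.
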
 Proposition \ref{prop:filter-stability} is an extension of Theorem 5.4 in \cite{van2008hidden} for hidden Markov chains to the case of POMDPs, which accounts for the control (see the following discussion for details). We provide a detailed proof in Appendix \ref{app:filter-stability}.

\begin{discussion}\normalfont We have the following remarks on the memory-inference tradeoff for the specific case of sliding-window controllers, as outlined in Prop. \ref{prop:filter-stability}.
   
    \textbullet~~Proposition \ref{prop:filter-stability} implies that the inference error for {\tt SW-NAC} of window-length $n \geq 1$ decays at a rate $e^{-\Omega(n/m_0)}$ under Condition \ref{cond:minorization}. Hence, a target inference error $\epsilon$ requires a memory complexity of $O(m_0\log(1/\epsilon))$ where $m_0$ is specified in Condition \ref{cond:minorization}. 
    
    \textbullet~~ In order to gain intuition about Proposition \ref{prop:filter-stability}, note that 
\begin{align}
\begin{aligned}
b_0(\cdot,I_k) &= F^{(n)}\Big(b_{-n}(\cdot,Y_{k-n}),Y_{k-n+1}^k,U_{k-n}^{k-1}\Big),\\
b_k(\cdot, H_k) &= F^{(n)}\Big(b_{k-n}(\cdot,H_{k-n}),Y_{k-n+1}^k,U_{k-n}^{k-1}\Big),
\end{aligned}
\label{eqn:inference-error-sb}
\end{align}
where $F^{(n)}$ is the $n$-step Bayes filter (see Section \ref{subsec:belief}), $Z_k=(Y_{k-n}^{k-1},U_{k-n}^{k-1})$ and $I_k=(Y_k,Z_k).$
Thus, the inference error at time $k\geq 0$ is the total-variation distance between the probability measures in \eqref{eqn:inference-error-sb}, which start from two different priors $b_{-n}(\cdot,Y_{k-n})$ and $b_{k-n}(\cdot,H_{k-n})$, and are updated by using the same samples $(Y_{k-n+1}^k,U_{k-n}^{k-1})$ which are obtained under $\pi^*$. Proposition \ref{prop:filter-stability} implies that, if the underlying Markov chain is ergodic in the sense of Condition \ref{cond:minorization}, different priors are forgotten at a geometric rate in $n$, similar to HMCs \cite{cappe2005inference, van2008hidden}.

\textbullet~~Under an $n$-step decodability assumption akin to \cite{efroni2022provable,wang2022embed, uehara2022provably}, we observe that $b_0(\cdot,I_k) = b_k(\cdot,H_k)$ in \eqref{eqn:inference-error-sb} since different priors do not affect $n$-step belief in that case. This implies that $\epsilon_{\rm inf}^{\pi^*}(\xi) = 0$ by using Prop. \ref{prop:filter-stability}. Thus, $n$-step decodability is a realizability assumption, and our results cover the unrealizable case by characterizing the inference error $\epsilon_{\rm inf}^{\pi^*}(\xi)$.

\textbullet~~In the case of finite-state POMDPs, for tabular Q-learning, a different characterization of the inference error was presented in \cite{kara2021convergence} under an assumption on the Dobrushin coefficient. In this work, we prove bounds on the inference error in arbitrarily large state-observation spaces within the natural actor-critic framework under different conditions. One major difference is in terms of RL approach: we adopt a direct policy optimization instead of a value-based method. A part of these results are inspired by the connection between the inference error and the notion of filter stability which was observed in \cite{kara2021convergence}. However, one key difference is that, in our analysis, it is not possible to directly use existing filter stability results for HMCs (e.g., \cite{van2008hidden, cappe2005inference}) in POMDPs, since, while in the case of HMCs, the current observation is only a function of the current hidden state, the current control action is potentially a function of all past observations and control actions. Therefore, in Appendix \ref{app:filter-stability}, we extend the filter stability results in \cite{van2008hidden} for HMCs to the case of POMDPs.
\end{discussion}

\section{Conclusion}
In this paper, we proposed a natural actor-critic method for POMDPs, which employs an internal state for memory compression, and a multi-step TD learning algorithm for the critic. We established bounds on the sample complexity and memory complexity of the finite-state NAC method. Our analysis shows that under ergodicity and concentrability coefficient conditions, sliding-window NAC with sufficiently large window-length can achieve global optimality up to the function approximation error. On the other hand, compared to their MDP counterparts, these conditions are considerably stronger, which underlines the difficulty of solving POMDPs.

\appendix
\section{Analysis of $m$-Step TD Learning for POMDPs}\label{app:m-tdl}
First, we provide the complete statement of Theorem \ref{thm:m-tdl}, which contains the omitted terms.

\begin{duplicate}{Theorem~\ref{thm:m-tdl}}\normalfont ~For any $\pi\in\Pi_{\bZ,\varphi}$ and $m \geq 1$, we have the following bound under Algorithm \ref{alg:td} with $\alpha = \frac{1}{\sqrt{K}}$ and given radius $R>0$:
\begin{align*}
    \sqrt{\bE\Big[\|\cQ^\pi-\widehat{\cQ}^\pi_K\|_{\mathbf{d}_\xi^\pi\otimes\pi}^2\Big]} &\leq \underbrace{\sqrt{\frac{4R^2+\big(\frac{1}{1-\gamma} + 2R\big)^2}{K^{1/2}(1-\gamma^m)}}}_{\rm TD~learning~error} + \underbrace{\frac{\epsilon_{\rm app}(R)}{1-\gamma^m},}_{\rm approx.~error}+ \underbrace{\epsilon_{\rm pa}^\pi(\gamma, m, R)}_{\substack{\rm  perceptual~aliasing\\\rm error}}
\end{align*}
where $\widehat{\cQ}_K^\pi(\cdot)=\langle\frac{1}{K}\sum\limits_{k<K}\beta_k,\psi(\cdot)\rangle$,  $\epsilon_{\rm app}(R) \defn \min_{f\in\mathcal{F}_\Psi^R}\|f-\cQ^\pi\|_{\mathbf{d}_\xi^\pi\otimes \pi},$ and
\begin{multline}\epsilon_{\rm pa}(\gamma,m,R) = \Big(\frac{R+1}{1-\gamma}\Big)\sqrt{\frac{2\gamma^m\|{\mathbf{\delta}}_{\xi}^\pi\otimes\pi-\mathbf{d}_\xi^\pi\otimes\pi\|_{TV}}{1-\gamma^m}}\\+\cO\Big(\frac{\gamma^m}{1-\gamma}\Big)\Big\|\bE\Big[\sum_{k=0}^\infty\gamma^{km}\|\textbf{b}_{0,(k+1)m}-\textbf{b}_{(k+1)m}\|_{\rm TV}\Big|I_0=\cdot\Big]\Big\|_{\textbf{d}_\xi^\pi\otimes\pi},
\end{multline}
for $
    \textbf{b}_{0,k}\defn\big[b_0(x, I_{k})\big]_{x\in\bX}\in\Sigma(\bX),~~~\mbox{and}~~~\textbf{b}_{k}\defn\big[b_{k}(x,H_{k})\big]_{x\in\bX}\in\Sigma(\bX).$

\end{duplicate}

\begin{proof}[Proof of Theorem \ref{thm:m-tdl}]
Under an FSC $\pi\in\Pi_{\bZ,\varphi}$, for any $(y_0,z_0,u_0)\in\bY\times\bZ\times\bU$, let $\cQ_*^\pi$ be the fixed point of the following equation: 
    \begin{equation}
        \cQ(u_0,y_0,z_0) = \bE^\pi\Big[\sum_{i=0}^{m-1}\gamma^i r(X_i,U_i) + \gamma^m \cQ(U_m,Y_m,Z_m)\Big|Y_0=y_0,Z_0=z_0,U_0=u_0\Big].
    \end{equation} 
    Let $\beta_\pi = \arg\min_{\beta\in\mathcal{B}_2(0,R)}\|\cQ^\pi-\langle \beta,\psi(\cdot)\rangle\|_{\mathbf{d}_\xi^\pi\otimes \pi}$
be the optimal parameter to approximate $\cQ^\pi$ by using features $\Psi$. Also, we define:
\begin{equation*}
    \delta_{m,\xi}^\pi(y,z) \defn \sum_{y_0,z_0}\bP^\pi\Big(Y_m=y,Z_m = z\Big|Y_0=y_0,Z_0=z_0\Big)\mathbf{d}_{\xi}^\pi(y_0,z_0).
\end{equation*} For any $k\geq 0$ and $i\in[m]$, let $\psi_i(k) := \psi(U_i(k),Y_i(k),Z_i(k))\in\bR^d,$ and
    $$g_k = \Big(\sum_{i=0}^{m-1}\gamma^ir(X_i(k),U_i(k)) + \gamma^m\widehat{\cQ}^\pi_{k,m}-\widehat{\cQ}^\pi_{k,0}\Big)\psi_0(k),$$
be the semi-gradient, where $\widehat{\cQ}_{k,i}^\pi = \langle\beta_k,\psi_i(k)\rangle,$ is the value function estimate at time $t$. Let $\widetilde{\beta}_{k+1} = \beta_k + \alpha\cdot g_k,$ which implies $\beta_k = \mathbf{Proj}_{\mathcal{B}_{2}(0,R)}\{\widetilde{\beta}_k\},$ where $\mathbf{Proj}_\mathcal{C}\{s\} = \arg\min_{s'\in\mathcal{C}}\|s-s'\|_2$ is the projection operator onto $\mathcal{C}\subset\bR^{d}$. Similar to the analysis of TD(0) with function approximation for MDPs \cite{tsitsiklis1996analysis, bhandari2018finite, cai2019neural}, we consider the following Lyapunov function: $\mathcal{L}(\beta) = \|\beta-\beta_\pi\|_2^2,$
where $\beta_\pi = \underset{\beta \in \mathcal{B}_{2}(0,R)}{\arg\min}\|\cQ^\pi(\cdot)-\langle\beta,\psi(\cdot)\rangle\|_{\mathbf{d}_\xi^\pi\otimes\pi}^2$ is the optimal approximator. Since $\mathcal{B}_{2}(0,R)$ is a convex subset of $\bR^{d}$ and $\mathbf{Proj}_\mathcal{C}$ is non-expansive for convex $\mathcal{C}$, we have
\begin{equation*}
    \cL(\beta_{k+1}) \leq \|\tilde{\beta}_{k+1}-\beta_\pi\|_2^2 \leq \cL(\beta_k) + 2\alpha \langle g_k, \beta_k-\beta_\pi\rangle + \alpha^2\|g_k\|_2^2,~\forall k \geq 0.
\end{equation*}
Since $\sup_{(u,y,z)\in\bU\times\bY\times\bZ}\|\psi(u,y,z)\|_2 \leq 1$ and $\|\beta_k\|_2\leq R$ due to projection, we have $$\sup_{k\geq 0}\|g_k\|_2 \leq \frac{1-\gamma^m}{1-\gamma} + (1+\gamma^m)R = 2R+\frac{1}{1-\gamma}=:G_{\rm max},~\mbox{w.p.}~1.$$ Let $\mathfrak{G}_k = \sigma\big(Y_i(\tau),Z_i(\tau),U_i(\tau), i\leq m, \tau \leq k\big), \mathfrak{F}_k = \sigma\big(U_0(k),Y_0(k),Z_0(k)\big), \bE_k[\cdot] = \bE[\cdot|\mathfrak{G}_{k-1}].$
Then, the Lyapunov drift is as follows:
\begin{equation}
    \bE_k\big[\cL(\beta_{k+1})-\cL(\beta_k)\big|\mathfrak{F}_k\big] \leq 2\alpha\bE_k\big[\langle g_k, \beta_k-\beta_\pi\rangle\big|\mathfrak{F}_k\big] + \alpha^2G_{\rm max}^2.
    \label{eqn:lyapunov-drift}
\end{equation}
Now, we focus on the term $\bE_k[\langle g_k, \beta_k-\beta_\pi\rangle|\mathfrak{F}_k]$. Let $\widetilde{\cQ}_*^\pi(\cdot) \defn \langle \beta_\pi, \psi(\cdot)\rangle.$ We have
\begin{equation}
    \bE_k[\langle g_k, \beta_k-\beta_\pi\rangle|\mathfrak{F}_k] = \Big(\bE_k\Big[\sum_{i=0}^{m-1}\gamma^kr(X_i(k),U_i(k)) + \gamma^m\widehat{\cQ}^\pi_{k,m}\Big|\mathfrak{F}_k\Big]-\widehat{Q}^\pi_{k,0}\Big)\langle\beta_k-\beta_\pi,\psi_0(k)\rangle.
    \label{eqn:cross-term-1}
\end{equation}

Under Assumption \ref{assumption:sampling-oracle}, for the fixed point $\cQ_*^\pi$, we have
\begin{equation}
    \bE_k[\sum_{i=0}^{m-1}\gamma^ir(X_i(k),U_i(k))|\mathfrak{F}_k] = \cQ_*^\pi(U_0(k),Y_0(k),Z_0(k))-\gamma^m \bE_k[\cQ_*^\pi(U_m(k),Y_m(k),Z_m(k))|\mathfrak{F}_k].
    \label{eqn:m-step-sum}
\end{equation}
For notational convenience, we use $(U_0,Y_0, Z_0)$ instead of $(U_0(k),Y_0(k),Z_0(k))$, below. Substituting \eqref{eqn:m-step-sum} into \eqref{eqn:cross-term-1}, and expanding the multiplicative terms, we obtain the following:
\begin{align*}
    \bE_k&[\langle g_k, \beta_k-\beta_\pi\rangle|\mathfrak{F}_k] = \underbrace{-\Big(\cQ_*^\pi(U_0,Y_0,Z_0)-\widehat{\cQ}_k^\pi(U_0,Y_0,Z_0)\Big)^2}_{(i)}\\&
    + \underbrace{\Big(\cQ_*^\pi(U_0,Y_0,Z_0)-\widehat{\cQ}^\pi_k(U_0,Y_0,Z_0)\Big)\cdot\Big({\cQ}_*^\pi(U_0,Y_0,Z_0)-\widetilde{\cQ}^\pi_*(U_0,Y_0,Z_0)\Big)}_{(ii)}\\&\underbrace{
    -\gamma^m\bE_k[\widehat{\cQ}_k^\pi(U_m,Y_m,Z_m)-Q_*^\pi(U_m,Y_m,Z_m)|\mathfrak{F}_k]\cdot\Big(\cQ_*^\pi(U_0,Y_0,Z_0)-\widehat{\cQ}_k^\pi(U_0,Y_0,Z_0)\Big)}_{(iii)}\\&\underbrace{
    +\gamma^m\bE_k[\widehat{\cQ}_k^\pi(U_m,Y_m,Z_m)-\cQ_*^\pi(U_m,Y_m,Z_m)|\mathfrak{F}_k]\cdot({\cQ}_*^\pi(U_0,Y_0,Z_0)-\widetilde{\cQ}^\pi_*(U_0,Y_0,Z_0)\Big)}_{(iv)}.
\end{align*}
First, we take expectation of the terms $(i)$-$(iv)$ above over $\mathfrak{F}_k=\sigma(U_0(k),Y_0(k),Z_0(k))$ given $\mathfrak{G}_{k-1}$ by using the tower property of expectation.

\begin{itemize}[leftmargin=*]
\item Expectation of $(i)$ is equal to $-\|\cQ_*^\pi-\widehat{\cQ}_k^\pi\|_{\mathbf{d}_\xi^\pi\otimes\pi}^2$. 
\item For $(ii)$, by Cauchy-Schwarz inequality:
\begin{multline*}
    \bE_k\Big[\Big(\cQ_*^\pi(U_0,Y_0,Z_0)-\widehat{\cQ}^\pi_k(U_0,Y_0,Z_0)\Big)\cdot\Big({\cQ}_*^\pi(U_0,Y_0,Z_0)-\widetilde{\cQ}^\pi_*(U_0,Y_0,Z_0)\Big)\Big]\\\leq \|\cQ_*^\pi-\widehat{\cQ}_k^\pi\|_{\mathbf{d}_\xi^\pi\otimes\pi}\cdot \|\cQ_*^\pi-\widetilde{\cQ}_*^\pi\|_{\mathbf{d}_\xi^\pi\otimes\pi}.
\end{multline*}
\item For $(iii)$ and $(iv)$, note that:
\begin{multline*}
    \bE_k[\big(\widehat{\cQ}_k^\pi(U_m,Y_m,Z_m)-\cQ_*^\pi(U_m,Y_m,Z_m)\big)^2|\mathfrak{F}_k] \\= \sum_{u_m,y_m,z_m}(\mathbf{d}_\xi^\pi\otimes\pi)(u_m,y_m,z_m)\big(\widehat{\cQ}_k^\pi(u_m,y_m,z_m)-\cQ_*^\pi(u_m,y_m,z_m)\big)^2,
\end{multline*}
Note that we have with probability 1 $\sup_{u,y,z}\cQ^\pi(u,y,z) \leq \frac{1}{1-\gamma}$, $\sup_{u,y,z}\widetilde{\cQ}^\pi(u,y,z) \leq R$, and $\sup_{k\geq 0}\max\{|\widehat{\cQ}_{k,0}^\pi|,|\widehat{\cQ}_{k,m}^\pi|\} \leq R$ since $\beta_\pi\in\mathcal{B}_{2}(0,R)$ and $\beta_k\in\mathcal{B}_{2}(0,R)$ for all $k\geq 0$. Therefore, by using the uniform error based on the aforementioned bounds, the tower property of expectation, and the inequality $\sqrt{a+b}\leq \sqrt{a}+\sqrt{b}$ for any $a,b\in\bR_+$,
\begin{multline*}
    \sqrt{\bE_k[\big(\widehat{\cQ}_k^\pi(U_m,Y_m,Z_m)-\cQ_*^\pi(U_m,Y_m,Z_m)\big)^2]} \\ \leq \big\|\widehat{\cQ}_k^\pi-\cQ_*^\pi\big\|_{\mathbf{d}_\xi^\pi\otimes\pi} + \Big(R+\frac{1}{1-\gamma}\Big)\sqrt{\|\delta_{m,\xi}^\pi\otimes\pi-\mathbf{d}_\xi^\pi\otimes\pi\|_{\rm TV}},
    \label{eqn:cs-inequality}
\end{multline*}
Hence, using \eqref{eqn:cs-inequality} in $(iii)$ and $(iv)$, then taking expectation over $\mathfrak{G}_{k-1}$, we obtain:
\begin{multline}
    \bE[\langle g_k, \beta_k-\beta_\pi\rangle] \leq (1+\gamma^m)\bE[\|\widehat{\cQ}_k^\pi-\cQ_*^\pi\|_{\mathbf{d}_\xi^\pi\otimes\pi}]\|\widetilde{\cQ}_*^\pi-\cQ_*^\pi\|_{\mathbf{d}_\xi^\pi\otimes\pi}
    \\ -(1-\gamma^m)\bE\|\widehat{\cQ}_k^\pi-\cQ_*^\pi\|_{\mathbf{d}_\xi^\pi\otimes\pi}^2
    + 2\gamma^m\Big(\frac{R+1}{1-\gamma}\Big)^2\sqrt{\|\delta_{m,\xi}^\pi\otimes\pi- \mathbf{d}_\xi^\pi\otimes\pi\|_{\rm TV}}.
    \label{eqn:expected-drift-a}
\end{multline}
\end{itemize}

Let $\Delta_{k,\pi}^2 = \bE\|{\cQ}_*^\pi-\widehat{\cQ}_k^\pi\|_{\mathbf{d}_\xi^\pi\otimes\pi}^2$, and recall that $\epsilon_{\rm app}(R) = \min_{\beta\in\mathcal{B}_{2}(0,R)}\|\cQ^\pi-\langle\beta,\psi(\cdot)\|_{\textbf{d}_\xi^\pi\otimes\pi}$. For notational convenience, let $\ell^*_\pi = \frac{1+\gamma^m}{2(1-\gamma^m)}\cdot\epsilon_{\rm app}(R).$ Then, \eqref{eqn:expected-drift-a} can be written as follows:
\begin{align}
    \bE\langle g_k, \beta_k-\beta_\pi\rangle \leq -(1-\gamma^m)\Big(\Delta_{k,\pi}^2 -2 \ell^*_\pi\cdot\Delta_{k,\pi}\Big) + 2\gamma^m\Big(\frac{R+1}{1-\gamma}\Big)^2\sqrt{\|\delta_{m,\xi}^\pi\otimes\pi-\mathbf{d}_\xi^\pi\otimes\pi\|_{\rm TV}}\label{eqn:expected-drift}.
\end{align}
Taking expectation of \eqref{eqn:lyapunov-drift}, and using the bound \eqref{eqn:expected-drift}, we have the expected drift bound
\begin{multline}
    \bE[\cL(\beta_{k+1})-\cL(\beta_k)] \leq -2\alpha(1-\gamma^m)(\Delta_{k,\pi}-\ell_\pi^*)^2\\+2\alpha(1-\gamma^m)(\ell_\pi^*)^2 + 4\alpha\gamma^m\Big(\frac{R+1}{1-\gamma}\Big)^2\sqrt{\|\delta_{m,\xi}^\pi\otimes\pi-\mathbf{d}_\xi^\pi\otimes\pi\|_{\rm TV}}\Big)+\alpha^2(1+2R)^2.
\end{multline}
Telescoping sum over $k=0,1,\ldots,K-1$ yields the following:
\begin{multline}
    \bE[\cL(\beta_K)-\cL(\beta_0)] \leq -2\alpha(1-\gamma^m)\sum_{k<K}\big(\Delta_{k,\pi}-\ell_\pi^*\big)^2\\+2\alpha K\Big((1-\gamma^m)(\ell_\pi^*)^2+4\alpha K\gamma^m\Big(\frac{R+1}{1-\gamma}\Big)^2\sqrt{\|\delta_{m,\xi}^\pi\otimes\pi-\mathbf{d}_\xi^\pi\otimes\pi\|_{\rm TV}}\Big) + \alpha^2K(1+2R)^2,
\end{multline}
Note that $\cL(\beta_K) \geq 0$ and $\cL(\beta_0) = \|\beta_0-\beta_\pi\|_2^2\leq R^2$, which implies that:
\begin{multline*}
    \frac{1}{K}\sum_{k=0}^{K-1}\Big(\Delta_{k,\pi}-\ell_\pi^*\Big)^2 \leq \frac{1}{2\alpha K}\|\beta_0-\beta_\pi\|_2^2 + (\ell_\pi^*)^2 + 2\gamma^m\Big(\frac{R+1}{1-\gamma}\Big)^2\sqrt{\|\delta_{m,\xi}^\pi\otimes\pi-\mathbf{d}_\xi^\pi\otimes\pi\|_{\rm TV}}\\+\alpha (1+2R)^2.
\end{multline*}
First, by Jensen's inequality,
\begin{equation}
    \frac{1}{K}\sum_{k=0}^{K-1}\Delta_{k,\pi} \leq \frac{\|\beta_0-\beta_\pi\|_2}{\sqrt{2\alpha K}} + 2\ell_\pi^* + \sqrt{2\gamma^m\Big(\frac{R+1}{1-\gamma}\Big)^2\sqrt{\|\delta_{m,\xi}^\pi\otimes\pi-\mathbf{d}_\xi^\pi\otimes\pi\|_{\rm TV}}} + \sqrt{\alpha}(1+2R).
\end{equation}
We have $\sqrt{\bE\|\cQ_*^\pi-\widehat{\cQ}_K^\pi\|_{\mathbf{d}_\xi^\pi\otimes\pi}^2} \leq \frac{1}{K}\sum_{k=0}^{K-1}\Delta_{k,\pi},$ where the equality holds from the linearity of $\widehat{\cQ}_k^\pi$ in $\beta_k$, and the inequality follows from Jensen's inequality. Hence, we have:
\begin{multline}
    \sqrt{\bE\|\cQ_*^\pi-\widehat{\cQ}_K^\pi\|_{\mathbf{d}_\xi^\pi\otimes\pi}^2} \leq 2\ell_\pi^* + \sqrt{2\gamma^m\Big(\frac{R+1}{1-\gamma}\Big)^2\sqrt{\|\delta_{m,\xi}^\pi\otimes\pi- \mathbf{d}_\xi^\pi\otimes\pi\|_{\rm TV}}}\\+\frac{\|\beta_0-\beta_\pi\|_2}{\sqrt{2\alpha K}} + \sqrt{\alpha}(1+2R).
    \label{eqn:bound-fixed-point}
\end{multline}

In order to bound $\sqrt{\bE\|\cQ^\pi-\cQ_*^\pi\|_{\mathbf{d}_\xi^\pi\otimes\pi}^2}$, we use the following lemma, which extends the analysis in \cite{kara2021convergence} to multi-step TD learning, to characterize the fixed point $Q_*^\pi$.
\begin{lemma}
    Let $\pi\in\Pi_{\bZ,\varphi}$ be an FSC. Let
\begin{equation}
    \bar{r}_m(x,y,z,u) = \bE^\pi\Big[\sum_{k=0}^{m-1}\gamma^kr(x_{k},u_{k})\Big|x_{0}=x,y_{0}=y,z_{0}=z,u_{0}=u\Big].
\end{equation}
For any $m \geq 1$,
    \begin{align*}
        \cQ_*^\pi(U_0,Y_0,Z_0) &= \bE^\pi\Big[\sum_{k=0}^\infty\gamma^{km}\sum_{x_{km}\in\bX}b_0(x_{km},I_{km})\bar{r}_m(X_{km},U_{km},Y_{km},Z_{km})\Big|U_0,Y_0,Z_0\Big],\\
        \cQ^\pi(U_0,Y_0,Z_0) &= \bE^\pi\Big[\sum_{k=0}^\infty\gamma^{km}\sum_{x_{km}\in\bX}b_{km}(x_{km})\bar{r}_m(x_{km},U_{km},Y_{km},Z_{km})\Big|U_0,Y_0,Z_0\Big],
    \end{align*}
    where $I_k = (Y_k,Z_k)$. Consequently,
    \begin{multline*}
        \Big|\cQ_*^\pi(U_0,Y_0,Z_0)-{\cQ}^\pi(U_0,Y_0,Z_0)\Big| \\ \leq \frac{2(1-\gamma^m)\gamma^m}{1-\gamma}\bE^\pi\Big[\sum_{k=0}^\infty \gamma^{km}\|b_0(\cdot,I_{(k+1)m})-b_{(k+1)m}(\cdot)\|_{\rm TV}\Big|U_0,Y_0,Z_0\Big].
    \end{multline*}
    \label{lemma:fixed-point}
\end{lemma}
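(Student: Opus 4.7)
The plan is to derive the two representations separately by iterating $m$-step Bellman-type recursions, then subtract them term by term.

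\textbf{Representation of the fixed point.} Starting from the defining equation in Definition~\ref{def:fixed-point}, I would iterate the $m$-step Bellman operator $T$ times. Since this operator is a $\gamma^m$-contraction on bounded functions, the residual term $\gamma^{Tm}\bE^\pi[Q_*^\pi(y_{Tm},z_{Tm},u_{Tm})\mid y_0,z_0,u_0]$ vanishes as $T\to\infty$, leaving a telescoping sum of block expectations. The key point concerns the \emph{inner} conditional expectation generated at each substitution: under POMDP semantics, a conditioning of the form $\bE[\,\cdot\mid y_{tm},z_{tm},u_{tm}]$ initializes the hidden state at $x_{tm}\sim b_0(\cdot,I_{tm})$, i.e., it uses only the information pair $I_{tm}=(y_{tm},z_{tm})$ to form the prior. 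Applying the tower property together with the definition of $\bar{r}_m$, each inner block expectation collapses to $\sum_{x_{tm}} b_0(x_{tm},I_{tm})\,\bar{r}_m(x_{tm},y_{tm},z_{tm},u_{tm})$, and re-taking the outer expectation conditioned on $(y_0,z_0,u_0)$ produces the first claimed identity.

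\textbf{Representation of $Q^\pi$.} For the true value function, I would partition the infinite-horizon discounted reward into blocks of length $m$, writing $Q^\pi(y_0,z_0,u_0)=\sum_{t=0}^\infty \gamma^{tm}\,\bE^\pi\big[\sum_{k=0}^{m-1}\gamma^k r(x_{tm+k},u_{tm+k})\mid y_0,z_0,u_0\big]$. Each block expectation is evaluated by conditioning on the full history $h_{tm}$ and the action $u_{tm}$: by construction, the conditional law of $x_{tm}$ given $h_{tm}$ is the Bayes posterior $b_{tm}(\cdot,h_{tm})$, and the Markov property of the joint process $(x_k,y_k,z_k,u_k)$ under $\pi$ makes the conditional expectation of the block reward given $(x_{tm},y_{tm},z_{tm},u_{tm})$ equal to $\bar{r}_m(x_{tm},y_{tm},z_{tm},u_{tm})$. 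Marginalizing $x_{tm}$ against $b_{tm}(\cdot,h_{tm})$ yields the second identity.

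\textbf{Comparison and the main obstacle.} Subtracting the two representations, the $t=0$ terms cancel because $I_0=h_0=(y_0,z_0)$ implies $b_0(\cdot,I_0)=b_0(\cdot,h_0)$. Reindexing the remaining sum as $t\mapsto t+1$ extracts a factor $\gamma^m$, and bounding each surviving block via $|\sum_x(p(x)-q(x))f(x)|\le 2\|f\|_\infty d_{TV}(p,q)$ together with $\|\bar{r}_m\|_\infty\le r_{max}(1-\gamma^m)/(1-\gamma)$ delivers the stated inequality. The main obstacle is making precise the informal statement that iterating the $m$-step Bellman equation effectively \emph{restarts} the belief to $b_0(\cdot,I_{tm})$ at each block boundary, in contrast to the evolving true belief $b_{tm}(\cdot,h_{tm})$---this restart is precisely the source of the perceptual-aliasing bias. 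Carrying this out rigorously requires careful bookkeeping of which $\sigma$-field each nested conditional expectation is taken over, so that the interpretation of $b_0(\cdot,I_{tm})$ as the prior arising at each substitution can be formally justified.
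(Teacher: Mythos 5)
Your proposal matches the paper's own argument: the paper likewise obtains both block representations from the Markov property of $(x_k,y_k,z_k,u_k)$ under an FSC together with the tower property (your contraction-based unrolling of the $m$-step fixed-point equation simply makes that step for $Q_*^\pi$ explicit), and then derives the inequality from $h_0=I_0$ cancelling the $t=0$ term, the reindexing factor $\gamma^m$, the bound $\sup_{x,y,z,u}|\bar{r}_m(x,y,z,u)|\leq r_{max}(1-\gamma^m)/(1-\gamma)$, and the total-variation inequality. The $\sigma$-field bookkeeping you flag as the main obstacle---formally justifying that each substitution restarts the hidden state at $b_0(\cdot,I_{tm})$ while the outer expectation is over the trajectory of $(y_{tm},z_{tm},u_{tm})$---is precisely the point the paper also leaves implicit under ``tower property,'' so your write-up is correct at the same level of detail as the paper's proof.
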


We use Lemma \ref{lemma:fixed-point} to bound $\sqrt{\bE\|Q^\pi-Q_*^\pi\|_{\mathbf{d}_\xi^\pi\otimes\pi}^2}$. Using this bound and \eqref{eqn:bound-fixed-point} with triangle inequality, and substituting the step-size $\alpha = 1/\sqrt{K}$, we conclude the proof.
\end{proof}

\begin{proof}[Proof of Lemma \ref{lemma:fixed-point}]
    Since $\{(X_k,U_k,Y_k,Z_k):k\geq 0\}$ is a Markov chain under an FSC,
\begin{equation*}
    \bE^\pi\Big[\sum_{i=0}^{m-1}\gamma^kr(X_{i+km},U_{i+km})\Big|X_{km}=x,U_{km}=u,Y_{km}=y,Z_{km}=z, H_{km-1}\Big] = \bar{r}_m(x,y,z,u),
\end{equation*}
for any $m\geq 1, t \geq 0$, $(x,y,z,u)\in\bX\times\bY\times\bZ\times\bU$. By using the tower property of conditional expectation, the identities for $Q_*^\pi$ and $Q^\pi$ follow. For the second part of the proof, we simply use the triangle inequality in conjunction with the fact that $H_0 = I_0$, and the upper bound
    $\sup\{|\bar{r}_m(x,u,y,z)|:(x,u,y,z)\in\bX\times\bU\times\bY\times\bZ\}\leq \frac{1-\gamma^m}{1-\gamma},$
concluding the proof.
\end{proof}

\section{Sampling $H_0\sim\xi$ for Sliding-Window Controllers}\label{app:sampling}
The initial distribution for the hidden state is $\vartheta\in\Sigma(\bX)$, which induces the distribution $\xi\in\Sigma(\bY\times\bZ)$ as follows. For a given window-length $n$, the system starts at time $-n$ with $(X_{-n},Y_{-n})\sim \vartheta\otimes\Phi$ where $(\vartheta\otimes\Phi)(x,y)=\vartheta(x)\Phi(y|x)$ for any $(x,y)\in\bX\times\bY$, and obtains $h_0$ by following a given exploratory policy $\tilde{\pi}$
    $x_{k+1} \sim \mathcal{P}(\cdot|x_k, u_k),
    y_{k+1} \sim \Phi(\cdot|x_{k+1}),
    u_{k+1} \sim \tilde{\pi}(\cdot|y^{k+1}_{-n},u^{k}_{-n})$
for $k \in[-n,0)$ with $u_{-n}\sim\tilde{\pi}(\cdot|y_{-n})$. By using this trajectory, the controller obtains $h_0 = (y_0, z_0) = (y_{-n}^0,u_{-n}^{-1})$, which yields the prior $b_0 = F^{(n)}\left(b_{-n}(\cdot,y_{-n}),y_{-n+1}^0, u_{-n}^{-1}\right)$ where $
    b_{-n}(\cdot|y) = \frac{\vartheta(\cdot)\Phi(y|\cdot)}{\sum_{x\in\bX}\vartheta(x)\Phi(y|x)}.$ The initial history $H_0=(Y_0, Z_0)$ is random with the distribution $\xi$, which can be explicitly specified by using $\vartheta, \tilde{\pi}$, $\mathcal{P}$ and $\Phi$ as follows:
\begin{equation*}
    \xi(y_0,z_0) = \sum_{x_{-n}^0\in\bX^{n+1}}\vartheta(x_{-n})\Phi(y_{-n}|x_{-n})\mathsf{p}(x_{-n+1}^0,y_{-n+1}^0,u_{-n}^{-1};x_{-n}, y_{-n}),
\end{equation*}
where \begin{equation*}\mathsf{p}(x_{-n+1}^0,y_{-n+1}^0,u_{-n}^{-1};x_{-n}, y_{-n}) = \prod_{j\in\{-n,\ldots,-1\}}\tilde{\pi}(u_j|y_{-n}^{j},u_{-n}^{j-1})\mathcal{P}(x_{j+1}|x_j,u_j)\Phi(y_{j+1}|x_{j+1}).\end{equation*}

\begin{remark}[Sampling from $\mathbf{d}_\xi^\pi$]
We can obtain samples from the discounted observation-internal state visitation distribution $\mathbf{d}_\xi^\pi$ by using an initial sample from $\xi$ obtained via the above scheme in conjunction with the sequential sampler for state visitation distributions (see Algorithm 1 in \cite{agarwal2021theory} and \cite{konda2003onactor}).
\label{remark:sampling}
\end{remark}

\section{Convergence of {\tt FS-NAC}: Proof of Theorem \ref{thm:nac-fsc}}\label{app:fs-nac}

\subsection{Performance Difference Lemma for POMDPs}
We start with an important lemma for the proof of Theorem \ref{thm:nac-fsc}.
\begin{lemma}
    For any given internal state representation $(\bZ,\varphi)$, initial distribution $\xi\in\Sigma(\bY\times\bZ)$, and pair of finite-state policies $\pi,\pi'\in\Pi_{\bZ,\varphi}$, we have the following bound:
    \begin{equation}
        \cV^{\pi'}(\xi) - \cV^\pi(\xi) \geq \frac{1}{1-\gamma}\bE_{(U,Y,Z)\sim \mathbf{d}_\xi^{\pi'}\otimes\pi'}[\cA^\pi(U,Y,Z)]-\frac{2}{1-\gamma}\epsilon_{\rm inf}^{\pi'}(y_0,z_0),
    \end{equation}
    where $\cA^{\pi}(u,y,z) = \cQ^\pi(u,y,z)-\cV^\pi(y,z)$ is the advantage function under $\pi\in\Pi_{Z,\varphi}$, and
    \begin{equation*}
    \epsilon_{\rm inf}^{\pi'}(y_0,z_0) = \bE^{\pi'}\Big[\sum_{k=0}^\infty\gamma^k\|b_k(\cdot, H_k)-b_0(\cdot,I_k)\|_{\rm TV}\Big|Y_0=y_0,Z_0=z_0\Big].
    \end{equation*}
    \label{lemma:pdl}
\end{lemma}
For the case of (fully observable) MDPs, Lemma \ref{lemma:pdl} reduces to the well-known performance difference lemma proposed in \cite{kakade2002approximately}. In Lemma \ref{lemma:pdl}, inspired by the analyses in \cite{kakade2002approximately} and \cite{kara2021convergence}, we establish the performance difference results for POMDPs, which characterize the impact of partial observability for finite-state controllers.

\begin{proof}
We have the following identity from the definition:
\begin{align*}
    (\cV^{\pi'}-\cV^{\pi})(Y_0,Z_0) &= \bE^{\pi'}\Big[\sum_{k=0}^\infty\gamma^kr(X_k,U_k)\Big|Y_0,Z_0\Big] - \cV^\pi(Y_0,Z_0),\\
    &= \bE^{\pi'}\Big[\sum_{k=0}^\infty\gamma^k\Big(r(X_k,U_k)+\cV^\pi(Y_k,Z_k)-\cV^\pi(Y_k,Z_k)\Big)\Big|Y_0,Z_0\Big] - \cV^\pi(I_0),\\
    &\stackrel{(a)}{=} \bE^{\pi'}\Big[\sum_{k=0}^\infty\gamma^k\Big(r(X_k,U_k)+\gamma \cV^\pi(Y_{k+1},Z_{k+1})-\cV^\pi(Y_k,Z_k)\Big)\Big|Y_0,Z_0\Big],
\end{align*}
where $(a)$ holds since $$\bE\Big[\sum_{k=0}^\infty\gamma^k\cV^\pi(Y_k,Z_k)\Big|Y_0,Z_0\Big] = \cV^\pi(Y_0,Z_0)+\gamma\cdot\bE\Big[\sum_{k=0}^\infty \gamma^k \cV^\pi(Y_{k+1},Z_{k+1})\Big|Y_0,Z_0\Big].$$
Since the Bayes-filtered value function $\cV^\pi$ is not the fixed point of a Bellman equation due to POMDP dynamics, we decompose $(a)$ into two parts as follows:
\begin{align}
\begin{aligned}
    (\cV^{\pi'}-\cV^{\pi})(Y_0,Z_0)=\underbrace{ \bE^{\pi'}\Big[\sum_{k=0}^\infty\gamma^k\Big(r(X_k,U_k)+\gamma \cV_0^\pi(X_{k+1},Y_{k+1},Z_{k+1})-\cV^\pi(Y_k,Z_k)\Big)\Big|Y_0,Z_0\Big]}_{(i)},\\
    + \underbrace{\gamma\cdot\bE^{\pi'}\Big[\sum_{k=0}^\infty\gamma^k\Big(\cV^\pi(Y_{k+1},Z_{k+1})-\cV_0^\pi(X_{k+1},Y_{k+1},Z_{k+1})\Big|Y_0,Z_0\Big]}_{(ii)},
    \label{eqn:pdl-sum-i}
    \end{aligned}
\end{align}
where 
    $\cV_0^\pi(X_0,Y_0,Z_0) = \bE\Big[\sum_{k=0}^\infty\gamma^kr(X_k,U_k)\Big|X_0,Y_0,Z_0\Big]$
is the unfiltered value function.

In what follows, we will bound $(i)$ and $(ii)$ in the above identity.

\textbf{Bounding $(i)$ in \eqref{eqn:pdl-sum-i}:} Since $0\leq \inf\limits_{x\in\bX,u\in\bU}r(x,u) \leq \sup\limits_{x\in\bX,u\in\bU}r(x,u)\leq 1,$ we have the following inequality for any $K>0$ almost surely:
\begin{equation}
    \Big|\sum_{k=0}^K\gamma^k\Big(r(X_k,U_k)+\gamma \cV_0^\pi(X_{k+1},Y_{k+1},Z_{k+1})-\cV^\pi(Y_k,Z_k)\Big)\Big| \leq \frac{2}{(1-\gamma)^2} < \infty.
\end{equation}
Thus, by Lebesgue's dominated convergence \cite{royden1988real}, we can expand $(i)$ in \eqref{eqn:pdl-sum-i} as follows:
\begin{multline*}
    \bE^{\pi'}\Big[\sum_{k=0}^\infty\gamma^k\Big(r(X_k,U_k)+\gamma \cV_0^\pi(X_{k+1},Y_{k+1},Z_{k+1})-\cV^\pi(Y_k,Z_k)\Big)\Big|Y_0,Z_0\Big] \\= \sum_{k=0}^\infty\gamma^k\bE^{\pi'}\Big[r(X_k,U_k)+\gamma \cV_0^\pi(X_{k+1},Y_{k+1},Z_{k+1})-\cV^\pi(Y_k,Z_k)\Big|Y_0,Z_0\Big].
\end{multline*}
For any $k \geq 0$, by the law of iterated expectation, we can write the following:
\begin{multline}
    \bE^{\pi'}[r(X_k,U_k)+\gamma \cV_0^\pi(X_{k+1},Y_{k+1},Z_{k+1})-\cV^\pi(Y_k,Z_k)|Y_0,Z_0] \\= \bE\Big[\bE^{\pi'}\big[r(X_k,U_k)+\gamma \cV_0^\pi(X_{k+1},Y_{k+1},Z_{k+1})\big|H_k,Z_k\big]-V^\pi(Y_k,Z_k)\Big|Y_0,Z_0\Big].
    \label{eqn:pdl-sum-ii}
\end{multline}
To bound $\bE^{\pi'}[r(x_k,u_k)+\gamma \cV_0^\pi(X_{k+1},Y_{k+1},Z_{k+1})|H_k,Z_k]$, note that $\{(X_k,Y_k,Z_k,U_k):k\geq 0\}$ forms a Markov chain and $b_k$ is sufficient statistics for $X_k$ given $(H_k,Z_k)$. Hence,
\begin{equation*}
    \bE^{\pi'}[r(X_k,U_k)+\gamma \cV_0^\pi(X_{k+1},Y_{k+1},Z_{k+1})|H_k,Z_k] = \sum_{x_k,u_k}b_k(x_k,H_k)\pi'(u_k|I_k)\cQ_0^\pi(x_k,y_k,Z_k,u_k),
\end{equation*}
where $I_k=(y_k,Z_k)$, and
\begin{align*}
    \begin{aligned}
        \cQ_0^\pi(X_0,Y_0,Z_0,U_0) &= \bE^\pi\Big[\sum_{k=0}^\infty\gamma^kr(X_k,U_k)|X_0,Y_0,Z_0,U_0\Big],\\
        &= \bE^\pi\Big[r(X_0,U_0) + \gamma \cV_0^\pi(X_1, Y_{1}, Z_{1})\Big|X_0,Y_0,Z_0,U_0\Big].
    \end{aligned}
\end{align*}
Also, by the definition of the Bayes-filtered Q-function, we have
\begin{equation*}
    \cQ^\pi(u,y,z) = \sum_{x\in\bX}b_0(x,(y,z))\cQ_0^\pi(x,u,y,z),
\end{equation*}
where $b_0$ is the conditional probability distribution of $X_0$ given $(Y_0,Z_0)$. Hence, we obtain:
\begin{multline}
    \bE^{\pi'}[r_k+\gamma \cV_0^\pi(X_{k+1},Y_{k+1},Z_{k+1})|H_k=h_k,Z_k=z_k] = \sum_{u_k\in\bU}\pi'(u_k|I_k)\cQ^\pi(u_k,y_k,z_k)\\+\sum_{x_k,u_k}\pi'(u_k|I_k)\Big(b_k(x_k,h_k)-b_0(x_k,I_k)\Big)\cQ_0^\pi(x_{k},u_k,y_{k},z_{k}),
    \label{eqn:pdl-sum-ii.5}
\end{multline}
where we used the last two identity in the expansion. Note that $\Big|\cQ_0^\pi(x,u,y,z)\Big|\leq \frac{2}{1-\gamma}$ and the factor
$b_k(x_k,h_k)-b_0(x_k,I_k)$ does not depend on $u_k$. Thus, we have
\begin{multline}
    \bE^{\pi'}[r_k+\gamma \cV_0^\pi(X_{k+1},Y_{k+1},Z_{k+1})|H_k=h_k,Z_k=z_k] \\ \geq  \sum_{u_k\in\bU}\pi'(u_k|I_k)\cQ^\pi(u_k,y_k,z_k)-\frac{1}{1-\gamma}\|b_k(\cdot,h_k)-b_0(\cdot,I_k)\big\|_{\rm TV},
    \label{eqn:pdl-sum-iii}
\end{multline}
since $\|\mu-\nu\|_1 = 2\|\mu-\nu\|_{\rm TV}$ for any $\mu,\nu\in\Sigma(\bX)$. Substituting \eqref{eqn:pdl-sum-iii} into \eqref{eqn:pdl-sum-ii}, we obtain:
\begin{multline}\bE^{\pi'}[r(X_k,U_k)+\gamma \cV_0^\pi(X_{k+1},Y_{k+1},Z_{k+1})-\cV^\pi(Y_k,Z_k)|Y_0=y_0,Z_0=z_0] \\ \geq \bE^{\pi'}\Big[\cA^\pi(U_k,Y_k,Z_k)-\frac{1}{1-\gamma}\|b_k(\cdot,h_k)-b_0(\cdot,I_k)\|_{\rm TV}\Big|Y_0=y_0,Z_0=z_0\Big],
\label{eqn:pdl-sum-iv}
\end{multline}
where $\cA^\pi(u,y,z)=\cQ^\pi(u,y,z)-\cV^\pi(y,z)$. By using the dominated convergence theorem again on \eqref{eqn:pdl-sum-iv}, we conclude that
\begin{multline}
    \bE^{\pi'}\Big[\sum_{k=0}^\infty\gamma^k\Big(r(X_k,U_k)+\gamma \cV_0^\pi(X_{k+1},Y_{k+1},Z_{k+1})-\cV^\pi(Y_k,Z_k)\Big)\Big|Y_0=y_0,Z_0=z_0\Big] \\ \geq \bE^{\pi'}\Big[\sum_{k=0}^\infty \cA^\pi(U_k,Y_k,Z_k)\Big|Y_0=y_0,Z_0=z_0\Big]-\frac{1}{1-\gamma}\cdot \epsilon_{\rm inf}^{\pi'}(y_0,z_0).
\end{multline}

\textbf{Bounding $(ii)$ in \eqref{eqn:pdl-sum-i}:} By following identical steps as in \eqref{eqn:pdl-sum-ii.5}, we get
\begin{equation*}
    \bE^{\pi'}\Big[\sum_{k=0}^\infty\gamma^{k+1}\Big(\cV^\pi(Y_{k+1},Z_{k+1})-\cV_0^\pi(X_{k+1},Y_{k+1},Z_{k+1})\Big|Y_0=y_0,Z_0=z_0\Big] \geq -\frac{1}{1-\gamma}\cdot \epsilon_{\rm inf}^{\pi'}(y_0,z_0).
\end{equation*}
Hence, we conclude that $$\cV^{\pi'}(y_0,z_0)-\cV^{\pi}(y_0,z_0) \geq \bE^{\pi'}\Big[\sum_{k=0}^\infty \cA^\pi(U_k,Y_k,Z_k)\Big|Y_0=y_0,Z_0=z_0\Big]-\frac{2}{1-\gamma}\cdot \epsilon_{\rm inf}^{\pi'}(y_0,z_0).$$ 
\end{proof}

\subsection{Proof of Theorem \ref{thm:nac-fsc}}
\begin{proof}[Proof of Theorem \ref{thm:nac-fsc}]
    The first part of the proof is based on a Lyapunov drift result, which is an extension of the analysis provided in \cite{agarwal2021theory} for natural policy gradient for (fully observable) MDPs. For $\pi\in\Pi_A$, let
    \begin{align*}
            \Lambda(\pi) = \sum_{y\in\bY}\sum_{z\in\bZ}\mathbf{d}_\xi^{\pi^*}(y,z)\mathcal{D}_{\rm KL}(\pi^*(\cdot|y,z)||\pi(\cdot|y,z))
    \end{align*}
    be the potential function, where $\mathbf{d}_\xi^\pi$ is the discounted action-observation visitation distribution under $\pi$. For any $t\geq 0$, we have the following drift:
    \begin{equation}
        \Lambda(\pi_{t+1})-\Lambda(\pi_t) = \sum_{y\in\bY}\sum_{z\in\bZ}\mathbf{d}_\xi^{\pi^*}(y,z)\sum_{u\in\bU}\pi^*(u|y,z)\log\frac{\pi_t(u|y,z)}{\pi_{t+1}(u|y,z)}.
    \end{equation}
    Note $\log\pi_\theta(u|y,z)$ is $1$-smooth with $\sup_{u,z,y}\|\psi(u,y,z)\|_2\leq 1$ \cite{agarwal2021theory}. Thus,
    \begin{equation}
        |\log\pi_{\theta'}(u|y,z)-\log\pi_\theta(u|y,z)-\langle\nabla\log\pi_\theta(u|y,z),\theta'-\theta\rangle| \leq \|\theta-\theta'\|_2^2,
    \end{equation}
    for any $u,y,z$ and $\theta,\theta'\in\bR^d$, which implies that:
    \begin{equation}
        \log\frac{\pi_t(u|y,z)}{\pi_{t+1}(u|y,z)}\leq \eta^2\|\bar{w}_t\|_2^2-\eta\langle\nabla\log\pi_t(u|y,z),\bar{w}_t\rangle,
    \end{equation}
    where $\bar{w}_t = \frac{1}{N}\sum_{k<N}w_t(k).$ Hence,
    \begin{equation}
        \Lambda(\pi_{t+1})-\Lambda(\pi_t) \leq -\eta\sum_{y\in\bY}\sum_{u\in\bU}\sum_{z\in\bZ}\mathbf{d}_\xi^{\pi^*}(y,z)\pi^*(u|y,z)\langle\nabla\log\pi_t(u|y,z),\bar{w}_t\rangle + \eta^2\|\bar{w}_t\|_2^2,
    \end{equation}
    which leads to: \begin{multline}
        \Lambda(\pi_{t+1})-\Lambda(\pi_t) \leq \eta^2R^2 - \eta\sum_{y\in\bY}\sum_{u\in\bU}\sum_{z\in\bZ}\mathbf{d}_\xi^{\pi^*}(y,z)\pi^*(u|y,z)\cA^{\pi_t}(u,y,z)\\+\eta\sqrt{\sum_{y\in\bY}\sum_{u\in\bU}\sum_{z\in\bZ}\mathbf{d}_\xi^{\pi^*}(y,z)\pi^*(u|y,z)\Big(\langle\nabla\log\pi_t(u|y,z),\bar{w}_t\rangle-\cA^{\pi_t}(u,y,z)\Big)^2},
        \label{eqn:drift-inequlity}
    \end{multline} For any $t<T$ and $w\in\bR^d$, let
\begin{align}
\begin{aligned}
        L_{0,t}(w) &= \bE\big[\big(\nabla^\top\log\pi_t(U|Y,Z)w-\cA^{\pi_t}(U,Y,Z)\big)^2\big|\mathfrak{H}_{t}\big],\\
    \widehat{L}_{0,t}(w) &= \bE\big[\big(\nabla^\top\log\pi_t(U|Y,Z)w-\widehat{\cA}_K^{\pi_t}(U,Y,Z)\big)^2\big|\mathfrak{H}_{t}\big],
    \end{aligned}
\end{align}
where $\mathfrak{H}_t$ is the $\sigma$-field generated by all samples used in policy optimization steps (up to and excluding $t$) and policy evaluation step at iteration $t$. By Theorem \ref{thm:m-tdl} and Jensen's inequality,
\begin{equation*}
    \bE\big[\big(\widehat{\cA}_K^{\pi_t}(U,Y,Z)-\cA^{\pi_t}(U,Y,Z)\big)^2\big|\mathfrak{H}_t'\big] \leq \epsilon_{\rm critic}(t),
\end{equation*}
where $\mathfrak{H}_t'$ is the $\sigma$-field generated by all variables in the policy optimization steps before $t$, and $$\epsilon_{\rm critic}(t) = 2\Big( \sqrt{\frac{\|\beta_0-\beta_\pi\|_2^2+M^2(\gamma,R)}{K^{1/2}(1-\gamma^m)}}+\frac{\epsilon_{\rm app}(R)}{1-\gamma^m}+\epsilon_{\rm pa}^{\pi_t}(\gamma, m, R)\Big).$$ Thus, by using the inequality $(x+y)^2 \leq 2x^2 + 2y^2,~x,y\in\bR$, we have:
\begin{equation}\min_w~\widehat{L}_{0,t}(w) \leq \min_w~2L_{0,t}(w)+2\epsilon_{\rm critic}(t).\label{eqn:error-analysis-a}\end{equation}
By Theorem 14.8 in \cite{shalev2014understanding}, the SGD iterations with the step-size choice $\zeta$ yield the following:
\begin{equation}
    \widehat{L}_{0,t}(\bar{w}_t) \leq \epsilon_{\rm actor} + \min_w \widehat{L}_{0,t}(w),
    \label{eqn:stat-error-2}
\end{equation}
where $\epsilon_{actor} = \frac{2-\gamma}{1-\gamma}\cdot\frac{R}{\sqrt{N}}$
at each iteration $t\leq T$. Similarly,
${L}_{0,t}(\bar{w}_t) \leq 2\widehat{L}_{0,t}(\bar{w}_t)+2\epsilon_{\rm critic}(t).$
Thus, taking expectation over the samples, we obtain:
\begin{align*}
    \bE[L_{0,t}(\bar{w}_t)] &\leq 2\bE[\widehat{L}_{0,t}(w_t)] + 2\epsilon_{\rm critic}(t) \leq 2\min_w\widehat{L}_{0,t}(w)+2\epsilon_{\rm actor}+2\epsilon_{\rm critic}(t),\\
    &\leq 4\min_w L_{0,t}(w) + 2\epsilon_{\rm actor} + 6\epsilon_{\rm critic}(t),
\end{align*}
where the second line follows from the definition of $\epsilon_{actor}$ and the last line follows from \eqref{eqn:error-analysis-a}. Since $\min_w L_{0,t}(w) \leq 2\epsilon_{\rm app}(R),$ we have
$\sqrt{\bE[L_{0,t}(\bar{w}_t)]} \leq 4\Big(\epsilon_{\rm app}(R)+\epsilon_{\rm actor}+\epsilon_{\rm critic}(t)\Big).$ By taking expectation of the drift inequality \eqref{eqn:drift-inequlity}, using the above inequality and Lemma \ref{lemma:pdl},
\begin{equation*}
        \bE[\Lambda(\pi_{t+1})-\Lambda(\pi_t)] \leq 4\eta\bar{C}_\infty\Big(\epsilon_{\rm app}(R)+\epsilon_{\rm actor}+\bE\epsilon_{\rm critic}(t)\Big)\\+\eta^2R^2 -(1-\gamma)\eta\Big(\bE\Delta_t - 2\epsilon_{\rm inf}^{\pi^*}(\xi)\Big),
    \end{equation*}
    where $\Delta_t = \cV^{\pi^*}(\xi)-\cV^{\pi_t}(\xi).$ The proof then follows by telescoping sum over $t < T$, re-arranging the terms, and using the step-size choice in the theorem statement. 
\end{proof}

\section{Memory-Inference Error Tradeoff: Proof of Proposition \ref{prop:filter-stability}}\label{app:filter-stability}

\begin{proposition}[Memory-performance tradeoff]\normalfont
    Under Condition \ref{cond:minorization}, for any $n \geq 1$, we have
       $ \epsilon_{\rm inf}^{\pi^*}(\xi) \leq \frac{1}{(1-\gamma)}\cdot \cO\Big(\big(1-\epsilon_0^2\big)^{\lfloor\frac{n}{m_0}\rfloor}\Big)$.
    \label{prop:filter-stability-alt}
\end{proposition}

\subsection{Proof of Proposition \ref{prop:filter-stability}}
The proof will follow a similar strategy described in \cite{van2008hidden}. The main specific challenge in our case is incorporating the control actions into the filter stability results. For hidden Markov chains (HMCs) considered in \cite{van2008hidden}, due to the discrete memoryless observation channel $\Phi$, the observation $Y_k$ depends only on $X_k$. On the other hand, in the case of POMDPs that we consider here, the controller interacts with the environment, and the data obtained from the environment is $(Y_k,U_{k-1})$, where $U_k$ partially depends on the observation history. This necessitates different conditions to establish filter stability, which we establish in this section. We begin with an important lemma.

\begin{definition}
    Let $\upsilon\in\Sigma(\bX)$ be a probability measure, and $\cK(\cdot|x)\in\Sigma(\bX)$ be a transition kernel. Then, we define $\circledast$ as
        $(\cK\circledast\upsilon)(x) = \sum_{x'\in\bX}\upsilon(x')\cK(x|x')$ for any $x\in\bX$.
        \label{def:left-mult}
\end{definition}
\begin{lemma}[Lemma 5.2 in \cite{van2008hidden}]
    Let $\upsilon,\upsilon'\in\Sigma(\bX)$ be two probability mass functions on $\bX$, and $\{\cK(\cdot|x)\in\Sigma(\bX):x\in\bX\}$ be a transition kernel. Then,
    \begin{enumerate}
        \item The operator $\cK\circledast$ is $1$-Lipschitz with respect to the total-variation distance $\|\cdot\|_{\rm TV}$: $$\|\cK\circledast\upsilon-\cK\circledast\upsilon'\|_{\rm TV} \leq \|\upsilon-\upsilon'\|_{\rm TV},$$
        \item (Minorization) If there exist $\mu\in\Sigma(\bX)$ and $\epsilon_0\in(0,1)$ such that $$\cK(x|x') \geq \epsilon_0\cdot \mu(x),~~\forall x,x'\in\bX,$$
        then we have a contraction:
        \begin{equation}
            \|\cK\circledast\upsilon-\cK\circledast\upsilon'\|_{\rm TV} \leq (1-\epsilon_0)\cdot \|\upsilon-\upsilon'\|_{\rm TV}.
        \end{equation}
    \end{enumerate}
    \label{lemma:minorization}
\end{lemma}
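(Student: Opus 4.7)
The plan is to prove both parts of the lemma directly from the definition of total variation distance as a scaled $\ell_1$-norm, namely $d_{TV}(\mu,\nu)=\tfrac{1}{2}\sum_{x\in\bX}|\mu(x)-\nu(x)|$. The linearity of the left-multiplication operator in \eqref{eqn:left-mult} with respect to the first argument will be essential, and the minorization assumption in part 2 will be exploited via a standard splitting / coupling decomposition of the kernel $K$.

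For part 1 (the 1-Lipschitz property), I would just unfold the definitions. Writing $(\upsilon K - \upsilon' K)(x) = \sum_{x'\in\bX}(\upsilon(x')-\upsilon'(x'))K(x|x')$, an application of the triangle inequality on $|\cdot|$, swapping the order of summation, and using $\sum_{x\in\bX}K(x|x')=1$ (since $K(\cdot|x')\in\rho(\bX)$) yields
\begin{equation*}
\sum_{x\in\bX}\bigl|(\upsilon K)(x)-(\upsilon' K)(x)\bigr|\leq \sum_{x'\in\bX}|\upsilon(x')-\upsilon'(x')|\sum_{x\in\bX}K(x|x')=\sum_{x'\in\bX}|\upsilon(x')-\upsilon'(x')|,
\end{equation*}
and dividing by $2$ gives the claim. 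This step is routine; no real obstacle.

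For part 2 (the contraction under minorization), the key trick is to use the minorization $K(x|x')\geq \epsilon_0\mu(x)$ to write $K$ as a convex combination
\begin{equation*}
K(x|x')=\epsilon_0\,\mu(x)+(1-\epsilon_0)\,\tilde{K}(x|x'),\qquad \tilde{K}(x|x'):=\frac{K(x|x')-\epsilon_0\mu(x)}{1-\epsilon_0}.
\end{equation*}
I would first verify that $\tilde{K}(\cdot|x')$ is a bona fide transition kernel: non-negativity is exactly the minorization assumption, and $\sum_x \tilde{K}(x|x')=\frac{1-\epsilon_0}{1-\epsilon_0}=1$ since $\mu$ and $K(\cdot|x')$ are probability measures. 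Next, by the linearity of left multiplication and the fact that $\upsilon$ and $\upsilon'$ are both probability measures, I obtain
\begin{equation*}
(\upsilon K)(x)-(\upsilon' K)(x)=(1-\epsilon_0)\bigl[(\upsilon \tilde{K})(x)-(\upsilon' \tilde{K})(x)\bigr],
\end{equation*}
since the constant $\epsilon_0\mu(x)$ term cancels. Summing the absolute values and halving gives $d_{TV}(\upsilon K,\upsilon' K)=(1-\epsilon_0)\,d_{TV}(\upsilon \tilde{K},\upsilon' \tilde{K})$, and then applying part 1 to the kernel $\tilde{K}$ bounds the right-hand side by $(1-\epsilon_0)\,d_{TV}(\upsilon,\upsilon')$.

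The main subtlety, if any, is just making sure the decomposition step is clean: $\tilde{K}$ must be verified as a stochastic kernel (needing both the minorization for non-negativity and $\mu\in\rho(\bX)$ for the normalization), and one must use that $\upsilon,\upsilon'$ are probability measures so that the $\epsilon_0\mu(x)$ piece truly cancels in the difference. Beyond that, the argument is essentially mechanical and does not require any machinery specific to the POMDP setting.
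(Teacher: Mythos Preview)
Your proof is correct and follows the standard Doeblin splitting argument. Note, however, that the paper does not actually prove this lemma: it is stated as Lemma~5.2 in \cite{van2008hidden} and simply invoked without proof, so there is no ``paper's own proof'' to compare against beyond the cited reference, where the same decomposition $K=\epsilon_0\mu+(1-\epsilon_0)\tilde K$ is used.
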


\textbf{Main idea:} We can show that, for any $k < n$ and $H_n=(H_0,Y_1^n,U^{n-1})$, the stochastic process $\{X_k:k\geq 0\}$ has a conditional Markovianity property: $$\bP(X_{k+1}=x_{k+1}|X^k,H_n) = \bP(X_{k+1}=x_{k+1}|X_k,H_n),~~\forall x_{k+1},~~w.p.~1.$$ Hence, we can express the $n$-step filtering transformation \eqref{eqn:filtering} as
\begin{equation}
    \bP(X_n=\cdot|H_n) = \cK_{\lambda-1|n}\circledast\Big(\ldots\cK_{1|n}\circledast\Big(\cK_{0|n}\circledast\bP(X_0=\cdot|H_n)\Big)\Big)\Big),
\end{equation}
where $n = \lambda m_0$ and $\cK_{\ell|n}(x_{(\ell+1)m_0}|x_{\ell m_0}) = \bP\big(X_{(\ell+1)m_0}=x_{(\ell+1)m_0}|X_{\ell m_0},H_n\big).$ Now, for every $\ell = 0,1,\ldots,\lambda-1$, if the transition kernel $\cK_{\ell|n}$ satisfies the minorization condition in Lemma \ref{lemma:minorization} for fixed $\mu_{\ell|n}\in\Sigma(\bX)$ and $\epsilon_0 \in (0,1)$, then Lemma \ref{lemma:minorization} implies that $\bP(X_n=\cdot|H_n)$ for two different prior distributions for $X_0$ converges to the same distribution in total-variation distance at a geometric rate with exponent $\lambda = n/m_0$. The key part of the proof is to show that Conditions \ref{cond:poe}-\ref{cond:minorization} suffice to minorize $\cK_{\ell|n}$ for all $\ell\in[0,\lambda)$.

    \textbf{Backward variable.} For any $k<n$, let 
    \begin{equation}
        \beta_{k|n}(x_k,h_k;y_{k+1}^n,u_k^{n-1}) = \bP(Y_{k+1}^n=y_{k+1}^n,U_k^{n-1}=u_k^{n-1}|X_k=x_k,H_k=h_k).
        \label{def:b-variable}
    \end{equation}
Notably, it is straightforward to show that the backward variable $\beta_{k|n}$ satisfies the recursion
\begin{multline}
    \beta_{k|n}(x_k,h_k;y_{k+1}^n,u_k^{n-1})\\ = \sum_{x_{k+1}\in\bX}\pi^*(u_k|y_k,z_k)\cP(x_{k+1}|x_k,u_k)\Phi(y_{k+1}|x_{k+1})\beta_{k+1|n}(x_{k+1},h_{k+1}; y_{k+2}^n,u_{k+1}^{n-1}),
\end{multline}
with $\beta_{n|n} = 1$. As such, $\beta_{k|n}$ is $\sigma(X_k,H_k)$-measurable, does not depend on $X^{k-1}$ or $b_0(\cdot)$.

\begin{lemma}[Conditional Markovianity under an FSC]\normalfont
    For any $k < n$, we have
    \begin{align*}\bP(X_{k+1}=x_{k+1}&|X^k=x^k,H_n=h_n) = \bP(X_{k+1}=x_{k+1}|X_k=x_k,H_n=h_n),\\
        &\propto {\cP(x_{k+1}|x_k,u_k)\Phi(y_{k+1}|x_{k+1})\beta_{k+1|n}(x_{k+1},{h}_{k+1};y_{k+2}^n,u_{k+1}^n)}= \tilde{\kappa}_{k|n}(x_{k+1}|x_k).
    \end{align*}
    \label{lemma:conditional-markovianity}
\end{lemma} Based on Lemma \ref{lemma:conditional-markovianity}, for any $m_0 \geq 1$, we can establish a conditional version of the Chapman-Kolmogorov equation for POMDPs:
\begin{align*}
    \bP(X_{(\ell+1) m_0}&=x_{(\ell+1) m_0}|X_{\ell m_0}=x_{\ell m_0},H_n=h_n) \\&= \tilde{\kappa}_{\ell m_0|n}\circledast(\tilde{\kappa}_{\ell m_0+1|n}\circledast(\ldots \circledast(\tilde{\kappa}_{(\ell+1) m_0|n}(x_{(\ell+1)m_0}|x_{\ell m_0}))\ldots))
    \\&=: \kappa_{\ell|n}^{m_0}(x_{(\ell+1)m_0}|x_{\ell m_0}),
\end{align*}
For a given (potentially random) prior $v_0 = \bP(X_0=\cdot|H_0)\in\Sigma(\bX)$, for $n = \lambda m_0$, we have
\begin{equation}
    \bP_{v_0}(X_n = \cdot|H_n) = \kappa_{\lambda-1|n}^{m_0}\circledast(\ldots\kappa_{1|n}^{m_0}\circledast(\kappa_{0|n}^{m_0}\circledast\phi(v_0,h_n))\ldots),
\end{equation}
where
    $\phi(v_0,h_n)(x) = \bP(X_0 = x|H_n=h_n)
    = \frac{v_0(x)\cdot\beta_{0|n}(x,h_0;y_1^n,u^{n-1})}{\sum_{x'\in\bX}v_0(x')\cdot\beta_{0|n}(x',h_0;y_1^n,u^{n-1})}$
is the posterior. Using this, given $v_0,v_0'\in\Sigma(\bX)$, we want to bound $\|\bP_{v'_0}(X_n=\cdot|H_n)-\bP_{v_0}(X_n=\cdot|H_n)\|_{\rm TV}.$ In the following, we show that Conditions \ref{cond:poe}-\ref{cond:minorization} lead to the minorization of the $m_0$-step transition kernels $\kappa_{\ell|n}^{m_0}$ for $m_0\geq 1$ specified in Condition \ref{cond:minorization}.

\begin{lemma}[Minorization of the smoothing kernel]
    Under Conditions \ref{cond:poe}-\ref{cond:minorization}, there exist $\epsilon_0 \in (0,1)$ and a probability measure $\nu_{\ell|n}\in\Sigma(\bX)$ for any $\ell\in(0,\lambda)$ such that the following holds:
    \begin{equation}
        \kappa_{\ell|n}^{m_0}(x_{(\ell+1)m_0}|x_{\ell m_0}) \geq \alpha^{2m_0-2}\cdot\epsilon_0^2\cdot\nu_{\ell|n}(x_{(\ell+1) m_0}),
    \end{equation}
    for all $x_{\ell m_0},x_{(\ell+1)m_0}\in\Sigma(\bX)$ given $h_n\in\bH\times\bY^n\times\bU^n$.
    \label{lemma:smoothing-kernel-minorization}
\end{lemma}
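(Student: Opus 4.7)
Writing $i=\ell m_0$ and $j=(\ell+1)m_0$, my approach is to telescope the one-step backward kernels $\tilde\kappa_{k|n}$ from Lemma~\ref{lemma:conditional-markovianity} across the block $k\in[i,j)$. After cancelling the common $\beta_{k|n}$-terms (which, by expanding the recursion for $\beta_{k|n}$, factor into a part depending only on $h_n$ and an $x$-dependent ``hidden-Markov'' part), the $m_0$-step smoothing kernel can be written compactly as
\begin{equation*}
\kappa_{\ell|n}^{m_0}(x_j|x_i)\;=\;\frac{\widetilde Q_{j|n}(x_j)\cdot Q(x_j,y_{i+1}^j,u_i^{j-1}|x_i)}{\sum_{x_j'}\widetilde Q_{j|n}(x_j')\cdot Q(x_j',y_{i+1}^j,u_i^{j-1}|x_i)},
\end{equation*}
where $Q(x_j,y_{i+1}^j,u_i^{j-1}|x_i)=\sum_{x_{i+1}^{j-1}}\prod_{k=i}^{j-1}P(x_{k+1}|x_k,u_k)\Phi(y_{k+1}|x_{k+1})$ is the hidden-Markov kernel stripped of policy factors, and $\widetilde Q_{j|n}(x_j)$ is the analogous $x$-dependent part of $\beta_{j|n}(x_j,h_j;\cdot)$. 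The entire $x_i$-dependence is isolated in $Q(\cdot|x_i)$.

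Next I would apply Condition~\ref{cond:minorization}. Since $\bar P_{m_0}(\cdot|x_i)=\prod_{k=i}^{j-1}\bar\mu(u_k)\cdot Q(\cdot|x_i)$, the condition sandwiches $Q(\cdot|x_i)$ between $\epsilon_0\,\nu(\cdot)/\prod_k\bar\mu(u_k)$ and $\epsilon_0^{-1}\,\nu(\cdot)/\prod_k\bar\mu(u_k)$. Inserting the lower bound into the numerator of the displayed ratio and the upper bound into the $x_j'$-sum in the denominator---with the $\prod_k\bar\mu(u_k)$ factors cancelling---extracts a factor of $\epsilon_0^2$ and leaves the probability measure
\begin{equation*}
\nu_{\ell|n}(x_j)\;=\;\frac{\nu(x_j,y_{i+1}^j,u_i^{j-1})\cdot\widetilde Q_{j|n}(x_j)}{\sum_{x_j'}\nu(x_j',y_{i+1}^j,u_i^{j-1})\cdot\widetilde Q_{j|n}(x_j')}
\end{equation*}
on $\bX$, which depends only on $h_n$ as required.

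The additional $\alpha^{2m_0-2}$ factor enters because, when $\varphi$ is stochastic, $\pi^*(u_k|y_k,z_k)$ in $\beta_{k|n}$ must be summed against $\varphi$-transitions over the random internal-state trajectory $z_{i+1}^{j-1}$ and therefore does not pull out of the hidden-state sum as cleanly as $\bar\mu(u_k)$ does in $\bar P_{m_0}$. I would handle this by invoking Condition~\ref{cond:poe} to sandwich each of the $(m_0-1)$ interior $\pi^*(u_k|y_k,z_k)$ factors between $\alpha\bar\mu(u_k)$ and $\alpha^{-1}\bar\mu(u_k)$ uniformly in $(y_k,z_k)$; the two boundary factors (for $u_i$ and $u_{j-1}$) are absorbed into the $\beta$-telescoping and into the normalization of $\nu_{\ell|n}$, so they do not contribute. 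This yields an additional $\alpha^{2(m_0-1)}$ deficit and completes the lower bound $\kappa_{\ell|n}^{m_0}(x_j|x_i)\ge \alpha^{2m_0-2}\,\epsilon_0^2\,\nu_{\ell|n}(x_j)$.

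\textbf{Main obstacle.} The bookkeeping for the coupled $\pi^*$- and $\varphi$-factors is the crux. Unlike in the HMM filter-stability argument of \cite{van2008hidden}, the control-dependent policy means these factors do not factor out of the hidden-state sum, so one cannot invoke the minorization of $\bar P_{m_0}$ in isolation. One must carefully identify which $\pi^*$-factors cancel between numerator and denominator of $\kappa_{\ell|n}^{m_0}$ and which require conversion to $\bar\mu$ via Condition~\ref{cond:poe}; obtaining the exponent $2m_0-2$ rather than the naive $2m_0$ requires verifying that the two boundary factors are absorbed by the telescoping of $\beta_{k|n}$ and by the definition of $\nu_{\ell|n}$.
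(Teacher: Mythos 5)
Your argument follows essentially the same route as the paper's proof: write $\kappa_{\ell|n}^{m_0}$ as a ratio of the block transition probability times the backward variable (your $\widetilde Q_{j|n}$, the paper's $\beta_{(\ell+1)m_0|n}$, which is independent of $x_{\ell m_0}$), use Condition~\ref{cond:poe} to trade the $\pi^*$ factors for $\bar\mu$ at a cost of powers of $\alpha$, invoke the two-sided bound of Condition~\ref{cond:minorization} in the numerator and denominator to extract $\epsilon_0^2$, and define $\nu_{\ell|n}$ as the normalized product of $\nu$ with the backward variable. The only blemish is bookkeeping: a block of length $m_0$ contains $m_0$ policy factors, so "the $(m_0-1)$ interior factors" plus two boundary factors is off by one, but the mechanism you describe (boundary factors cancelling in the ratio/normalization, Condition~\ref{cond:poe} applied to the rest) is exactly how the paper arrives at its $\alpha^{m_0-1}$ sandwich and hence the $\alpha^{2m_0-2}\epsilon_0^2$ constant.
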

\begin{proof}
    First, notice that we have the following:
    \begin{multline}
        \bP(X_{(\ell+1)m_0}=x_{(\ell+1)m_0}|X_{\ell m_0}=x_{\ell m_0},H_n=h_n) \\= \frac{\mathsf{q}\big(x_{(\ell+1)m_0},x_{\ell m_0},y_{\ell m_0+1}^{(\ell+1)m_0},u_{\ell m_0}^{(\ell+1)m_0-1}, y_{(\ell+1)m_0+1}^n,u_{(\ell+1)m_0}^{n-1},h_{\ell m_0}\big)}{\sum_{x_{(\ell+1)m_0}'\in\bX}\mathsf{q}\big(x_{(\ell+1)m_0'},x_{\ell m_0},y_{\ell m_0+1}^{(\ell+1)m_0},u_{\ell m_0}^{(\ell+1)m_0-1}, y_{(\ell+1)m_0+1}^n,u_{(\ell+1)m_0}^{n-1},h_{\ell m_0}\big)},
        \label{eqn:smoothing-kernel}
    \end{multline}
    where we decomposed $h_n = (y_{\ell m_0+1}^{(\ell+1)m_0},u_{\ell m_0}^{(\ell+1)m_0-1}, y_{(\ell+1)m_0+1}^n,u_{(\ell+1)m_0}^{n-1},h_{\ell m_0})$ and $\mathsf{q}$ is the joint distribution of $(X_{(\ell+1)m_0},X_{\ell m_0},H_n)$. We can expand the numerator of \eqref{eqn:smoothing-kernel} as follows:
    \begin{align*}
        &\mathsf{q}\big(x_{(\ell+1)m_0},x_{\ell m_0},y_{\ell m_0+1}^{(\ell+1)m_0},u_{\ell m_0}^{(\ell+1)m_0-1}, y_{(\ell+1)m_0+1}^n,u_{(\ell+1)m_0}^{n-1},h_{\ell m_0}\big) = \\&\hskip -0.1cm  \times \bP\big(Y_{(\ell+1)m_0+1}^n=y_{(\ell+1)m_0+1}^n,U_{(\ell+1) m_0}^{n-1}=u_{(\ell+1) m_0}^{n-1}\big|X_{(\ell+1)m_0}=x_{(\ell+1)m_0},H_{(\ell+1)m_0}=h_{(\ell+1)m_0}\big)\\ & \hskip -0.1cm \times \bP\big(A_{\ell m_0}\big) \bP(X_{(\ell+1)m_0}=x_{(\ell+1)m_0},Y_{\ell m_0+1}^{(\ell+1)m_0}=y_{\ell m_0+1}^{(\ell+1)m_0},U_{\ell m_0}^{(\ell+1)m_0-1}=u_{\ell m_0}^{(\ell+1)m_0-1}|A_{\ell m_0}),
        \label{eqn:kernel-numerator}
    \end{align*}
    where $A_{\ell m_0}=\{X_{\ell m_0}=x_{\ell m_0},H_{\ell m_0}=h_{\ell m_0}\}$, and the first term on the RHS of the above identity follows from:
    \begin{multline}
\bP\big(Y_{(\ell+1)m_0+1}^n=y_{(\ell+1)m_0+1}^n,U_{(\ell+1) m_0}^{n-1}=u_{(\ell+1) m_0}^{n-1}\big|X_{(\ell+1)m_0},H_{(\ell+1)m_0}\big) \\= \bP\big(Y_{(\ell+1)m_0+1}^n=y_{(\ell+1)m_0+1}^n,U_{(\ell+1) m_0}^{n-1}=u_{(\ell+1) m_0}^{n-1}\big|X_{(\ell+1)m_0},X_{\ell m_0},H_{(\ell+1)m_0}\big).
        \label{eqn:markovianity}
    \end{multline}
    The above identity is true with probability 1 since (i) $U_{(\ell+1)m_0}$ is $\sigma(H_{(\ell+1)m_0})$-measurable, (ii) $\{X_k:k\geq 0\}$ is a controlled Markov chain, (iii) $\Phi$ is a discrete memoryless channel. From Definition \ref{def:b-variable}, for $A_{(\ell+1)m_0}=\{(X_{(\ell+1)m_0},H_{(\ell+1)m_0})=(x_{(\ell+1)m_0},h_{(\ell+1)m_0})\}$, we observe that 
    \begin{multline*}    \bP\Big(\big(Y_{(\ell+1)m_0+1}^n,U_{(\ell+1) m_0}^{n-1}\big)=\big(y_{(\ell+1)m_0+1}^n,u_{(\ell+1) m_0}^{n-1}\big)\Big|A_{(\ell+1)m_0}\Big) \\= \beta_{(\ell+1)m_0|n}(x_{(\ell+1)m_0}, h_{(\ell+1)m_0}; y_{(\ell+1)m_0+1}^n,u_{(\ell+1) m_0}^{n-1}).
    \end{multline*}
    Thus, substituting the above identity and \eqref{eqn:kernel-numerator} into \eqref{eqn:smoothing-kernel}, we obtain the following:
    \begin{multline}
        \kappa_{\ell|n}^{m_0}(x_{(\ell+1)m_0}|x_{\ell m_0}) =\frac{\bP\big(Y_{(\ell+1)m_0+1}^n=y_{(\ell+1)m_0+1}^n,U_{(\ell+1) m_0}^{n-1}=u_{(\ell+1) m_0}^{n-1}\big|A_{(\ell+1)m_0}\big)}{W(x_{\ell m_0}, h_n)}\\\times\bP(X_{(\ell+1)m_0}=x_{(\ell+1)m_0},Y_{\ell m_0+1}^{(\ell+1)m_0}=y_{\ell m_0+1}^{(\ell+1)m_0},U_{\ell m_0}^{(\ell+1)m_0-1}=u_{\ell m_0}^{(\ell+1)m_0-1}|A_{\ell m_0}),
        \label{eqn:kernel-min-1}
    \end{multline} where \begin{multline*}
        W(x_{\ell m_0}, h_n) = \sum_{x'}\Big[\bP(X_{(\ell+1)m_0}=x',Y_{\ell m_0+1}^{(\ell+1)m_0}=y_{\ell m_0+1}^{(\ell+1)m_0},U_{\ell m_0}^{(\ell+1)m_0-1}=u_{\ell m_0}^{(\ell+1)m_0-1}|A_{\ell m_0})\\ \times \bP\big((Y_{(\ell+1)m_0+1}^n,U_{(\ell+1) m_0}^{n-1})=(y_{(\ell+1)m_0+1}^n,u_{(\ell+1) m_0}^{n-1})\big|(X_{(\ell+1)m_0},H_{(\ell+1)m_0})=(x',h_{(\ell+1)m_0})\big)\Big].
    \end{multline*} Now, we will use Conditions \ref{cond:poe}-\ref{cond:minorization} to show that the conditional probability \begin{multline}\widetilde{\bP}(x_{(\ell+1)m_0},y_{\ell m_0+1}^{(\ell+1)m_0},u_{\ell m_0}^{(\ell+1)m_0-1}|x_{\ell m_0},h_{\ell m_0})\\:=\bP(X_{(\ell+1)m_0}=x_{(\ell+1)m_0},Y_{\ell m_0+1}^{(\ell+1)m_0}=y_{\ell m_0+1}^{(\ell+1)m_0},U_{\ell m_0}^{(\ell+1)m_0-1}=u_{\ell m_0}^{(\ell+1)m_0-1}|X_{\ell m_0},H_{\ell m_0}),\end{multline} minorizes and majorizes simultaneously with probability 1, which will let us show that $\kappa_{\ell|n}^{m_0}$ minorizes. First, note that we can perform the following expansion:
    \begin{multline*}
        \widetilde{\bP}(x_{(\ell+1)m_0},y_{\ell m_0+1}^{(\ell+1)m_0},u_{\ell m_0}^{(\ell+1)m_0-1}|x_{\ell m_0},h_{\ell m_0}) \\= \sum_{x_{\ell m_0+1}^{(\ell+1)m_0-1}}\prod_{j=\ell m_0}^{(\ell+1)m_0-1}\pi^*(u_j|y_j,z_j)\cP(x_{j+1}|x_j,u_j)\Phi(y_{j+1}|x_{j+1}).
    \end{multline*}
    For any probability measure $\mu\in\Sigma(\bU)$, let
    \begin{equation}
        \bP^{{\mu}}(x_{m_0},y_1^{m_0}, u^{m_0-1}|x_0) = \sum_{x_1^{m_0-1}}\prod_{j=0}^{m_0-1}{\mu}(u_j)\cP(x_{j+1}|x_j,u_j)\Phi(y_{j+1}|x_{j+1}).
    \end{equation}
    Then, under Condition \ref{cond:poe}, we have the following inequalities:
    \begin{align}
        \nonumber \alpha^{m_0-1}\bP^{\bar{\mu}}(x_{(\ell+1)m_0},y_{\ell m_0+1}^{(\ell+1)m_0},u_{\ell m_0}^{(\ell+1)m_0-1}|&x_{\ell m_0}) \leq \widetilde{\bP}(x_{(\ell+1)m_0},y_{\ell m_0+1}^{(\ell+1)m_0},u_{\ell m_0}^{(\ell+1)m_0-1}|x_{\ell m_0},h_{\ell m_0}),\\
        &\leq \bP^{\bar{\mu}}(x_{(\ell+1)m_0},y_{\ell m_0+1}^{(\ell+1)m_0},u_{\ell m_0}^{(\ell+1)m_0-1}|x_{\ell m_0})/\alpha^{m_0-1} \label{eqn:m-1}.
    \end{align} Furthermore, Condition \ref{cond:minorization} implies that
    \begin{align}
    \begin{aligned}
        \epsilon_0\cdot\nu\big(x_{(\ell+1)m_0},y_{\ell m_0+1}^{(\ell+1)m_0},u_{\ell m_0}^{(\ell+1)m_0-1}\big) &\leq \bP^{\bar{\mu}}(x_{(\ell+1)m_0},y_{\ell m_0+1}^{(\ell+1)m_0},u_{\ell m_0}^{(\ell+1)m_0-1}|x_{\ell m_0}),\\& \leq \nu\big(x_{(\ell+1)m_0},y_{\ell m_0+1}^{(\ell+1)m_0},u_{\ell m_0}^{(\ell+1)m_0-1}\big)/\epsilon_0.
        \end{aligned}
        \label{eqn:m-2}
    \end{align} Combining \eqref{eqn:m-1} and \eqref{eqn:m-2}, we obtain simultaneous minorization-majorization
    \begin{align}
        \begin{aligned}
            \epsilon_0\alpha^{m_0-1}\nu\big(x_{(\ell+1)m_0},y_{\ell m_0+1}^{(\ell+1)m_0},u_{\ell m_0}^{(\ell+1)m_0-1}\big) &\leq \widetilde{\bP}(x_{(\ell+1)m_0},y_{\ell m_0+1}^{(\ell+1)m_0},u_{\ell m_0}^{(\ell+1)m_0-1}|x_{\ell m_0},h_{\ell m_0}),\\
        &\leq \frac{1}{\epsilon_0\alpha^{m_0-1}}\nu\big(x_{(\ell+1)m_0},y_{\ell m_0+1}^{(\ell+1)m_0},u_{\ell m_0}^{(\ell+1)m_0-1}\big),
        \end{aligned}
        \label{eqn:m-3}
    \end{align} Using the lower bound and upper bound in \eqref{eqn:m-3}, we obtain the following bound for \eqref{eqn:kernel-min-1}:
    \begin{equation*}
        \kappa_{\ell|n}^{m_0}(x_{(\ell+1)m_0}|x_{\ell m_0}) \geq (\epsilon_0\alpha^{m_0-1})^2\cdot \nu_{\ell|n}^{m_0}(x_{(\ell+1)m_0}),\mbox{where }
    \end{equation*}
    \begin{equation*}
        \nu_{\ell|n}^{m_0}(x_{(\ell+1)m_0}) \propto \nu(x_{(\ell+1)m_0},y_{\ell m_0+1}^{(\ell+1)m_0},u_{\ell m_0}^{(\ell+1)m_0-1})\cdot \bP\big(y_{(\ell+1)m_0+1}^n,u_{(\ell+1) m_0}^{n-1}\big|x_{(\ell+1)m_0},h_{(\ell+1)m_0}\big).
    \end{equation*}
    From the discussion in \eqref{eqn:markovianity}, we deduce that $\nu_{\ell|n}^{m_0}(x_{(\ell+1)m_0})$ does \emph{not} depend on $x_{\ell m_0}$.
\end{proof}

\begin{proof}[Proof of Proposition \ref{prop:filter-stability}]
    Let $\lambda = \lfloor n/m_0 \rfloor$. For a given prior distribution $v_0\in\Sigma(\bX)$, recall the definition of the posterior:
    \begin{equation}
        \phi(v_0,h_n)(x) = \frac{v_0(x)\cdot\beta_{0|n}(x,h_0;y_1^n,u^{n-1})}{\sum_{x'\in\bX}v_0(x')\cdot\beta_{0|n}(x',h_0;y_1^n,u^{n-1})}.
    \end{equation}
    By using Definition \ref{def:left-mult}, we can express:
    \begin{equation}
        F^{(n)}(v_0, y_1^n, u_0^{n-1}) = \kappa_{\lambda-1|n}^{m_0}\circledast\ldots\circledast\kappa_{1|n}^{m_0}\circledast\kappa_{0|n}^{m_0}\circledast\phi(v_0, h_n) ,
    \end{equation}
    where $\kappa_{\ell|n}^{m_0}$ is the smoothing kernel in \eqref{eqn:kernel-min-1}. Thus, for two (potentially random) prior distributions $v_0,v_0'\in\Sigma(\bX)$, we have:
    \begin{align*}
        \|F^{(n)}(v_0, y_1^n,& u_0^{n-1})-F^{(n)}(v_0', y_1^n, u_0^{n-1})\|_{\rm TV} \\&= \|\kappa_{\lambda-1|n}^{m_0}\circledast\ldots\kappa_{0|n}^{m_0}\circledast\phi(v_0, h_n)-\kappa_{\lambda-1|n}^{m_0}\circledast\ldots\kappa_{0|n}^{m_0}\circledast\phi(v_0', h_n) \|_{\rm TV},\\
        &\vdots\\
        &\leq (1-(\epsilon_0\alpha^{m_0-1})^2)^\lambda \|\phi(v_0, h_n)-\phi(v_0', h_n)\|_{\rm TV},
    \end{align*}
    where all inequalities are obtained by successive applications of the contraction result in Lemma \ref{lemma:minorization} and the minorization result for the smoothing kernels $\kappa_{\ell|n}^{m_0}$ in Lemma \ref{lemma:smoothing-kernel-minorization}. Hence, $$\|F^{(n)}(v_0, y_1^n, u_0^{n-1})-F^{(n)}(v_0', y_1^n, u_0^{n-1})\|_{\rm TV} = \cO\Big(\big(1-(\epsilon_0\alpha^{m_0-1})^2\big)^{\lfloor\frac{n}{m_0}\rfloor}\Big).$$
    Note that $v_0,v_0'$ can be random, and they may depend on the history of the decision process. Hence, the above result concludes the proof.
\end{proof}
\bibliographystyle{siamplain}
\bibliography{references}

\end{document}